\newcommand{\R}{\mathbb{R}}
\newcommand{\E}{\mathbb{E}}
\newcommand{\V}{\Vert}
\newcommand{\blank}{\hspace{0.1cm}}
\begin{document}
\title{Compositional Stochastic Average Gradient for \\Machine Learning and Related Applications}
%
% \titlerunning{C-SAG: a Method for FS-CEVF Optimization}
% If the paper title is too long for the running head, you can set
% an abbreviated paper title here
%
\author{Tsung-Yu Hsieh$^{1}$ \and 
Yasser EL-Manzalawy$^2$ \and 
Yiwei Sun$^1$ \and
Vasant Honavar$^{1,2}$
}
\authorrunning{T. Hsieh et al.}
% First names are abbreviated in the running head.
% If there are more than two authors, 'et al.' is used.
%
\institute{$^1$ Department of Computer Science and Engineering,\\
$^2$ College of Information Science and Technology,\\
The Pennsylvania State University, University Park PA 16802, USA\\
\email{\{tuh45,yme2,yus162,vuh14\}@psu.edu}}
\maketitle              % typeset the header of the contribution
\begin{abstract}
Many machine learning, statistical inference, and portfolio optimization problems require minimization of a composition of expected value functions (CEVF). Of particular interest is the finite-sum versions of such compositional optimization problems (FS-CEVF). Compositional stochastic variance reduced gradient (C-SVRG) methods that combine stochastic compositional gradient descent (SCGD) and stochastic variance reduced gradient descent (SVRG) methods are  the state-of-the-art methods for FS-CEVF problems.  We introduce compositional stochastic average gradient descent (C-SAG) a novel extension of the stochastic average gradient method (SAG) to minimize composition of finite-sum functions. C-SAG, like SAG, estimates gradient by incorporating memory of previous gradient information. We present theoretical analyses of C-SAG which show that C-SAG, like SAG, and C-SVRG, achieves a linear convergence rate when the objective function is strongly convex; However, C-CAG achieves lower oracle query complexity per iteration than C-SVRG. Finally, we present results of experiments showing that C-SAG converges substantially faster than full gradient (FG), as well as C-SVRG. 

\keywords{Machine Learning \and Stochastic Gradient Descent \and Compositional Finite-sum Optimization \and Stochastic Average Gradient Descent  \and Compositional Stochastic Gradient Descent \and Convex Optimization}
\end{abstract}
\noindent{\bf Note to the readers:} The short version of this paper has been accepted by the 19th International Conference on Intelligent Data Engineering and Automated Learning, 2018. This version includes detailed proofs of the main theorem and the supporting lemmas that are not included in the conference publication due to space constraints.
\section{Introduction}

% \subsubsection*{Background and Related Works}
Many problems in machine learning and statistical inference (e.g., \cite{dai2016learning}), risk management (e.g., \cite{dentcheva2017statistical}), multi-stage stochastic programming (e.g., \cite{shapiro2009lectures}) and adaptive simulation (e.g., \cite{hu2014model}) require minimization of linear or non-linear compositions of expected value functions  (CEVF) \cite{wang2017stochastic}.  Of particular interest are finite-sum versions of the CEVF optimization problems (FS-CEVF for short) which find applications in estimating sparse additive models (SpAM) \cite{SpAM}, maximizing softmax likelihood functions \cite{fagan2018unbiased} (with a wide range of applications in supervised learning \cite{bishop2006pattern,friedman2001elements,theodoridis2015machine}), portfolio optimization \cite{lian2017finite}, and policy evaluation in reinforcement learning \cite{sutton1998reinforcement}. 

Stochastic gradient descent (SGD) methods \cite{kiefer1952stochastic,robbins1985stochastic}, fast first-order methods for optimization of differentiable convex functions, and their variants, have long found applications in machine learning and related areas \cite{rumelhart1986learning,lecun2012efficient,darken1990fast,bottou2010large,amari1993backpropagation,bottou1991stochastic,baird1999gradient,kingma2014adam,sutton2009fast,zeiler2012adadelta,schmidt2012,zhang2004solving,cauwenberghs1993fast,le2011optimization,tan2016barzilai,rakhlin2011making,lecun2015deep,bottou2018optimization}. SGD methods continue to be an area of active research focused on methods for their parallelization \cite{zhang2004solving,recht2011hogwild,zhao2016fast,johnson2013accelerating,jain2018accelerating,wang2016accelerating,schmidt2017minimizing,zinkevich2010parallelized}, and theoretical guarantees \cite{rakhlin2011making,shamir2016convergence,mandt2017stochastic,needell2014stochastic,jin2016provable}.

CEVF and FS-CEVF optimization problems have been topics of focus in several recent papers.  Because SGD methods update the parameter iteratively by using {\em unbiased samples} (queries) of the gradient, the  linearity of the objective function in the sampling probabilities is crucial for establishing the attractive properties of SGD. Because the CEVF objective functions violate the required linearity property, classical SGD methods no longer suffice \cite{wang2017stochastic}. Against this background, Wang et al. \cite{wang2017stochastic}, introduced a class of stochastic compositional gradient descent (SCGD) algorithms for solving CEVF optimization problems. SCGD, a compositional version of stochastic quasi-gradient methods \cite{ermoliev1988stochastic}, has been proven to achieve convergence rate that is sub-linear in $K$, the number of stochastic samples (i.e. iterations)\cite{wang2017stochastic,wang2016accelerating,wang2017accelerating}. 

Lian et al. \cite{lian2017finite} proposed compositional variance-reduced gradient methods (C-SVRG) for FS-CEVF optimization problems. C-SVRG methods combine SCGD methods for CEVF optimization \cite{wang2017stochastic} with stochastic variance-reduced gradient (SVRG) method \cite{johnson2013accelerating} to achieve 
a convergence rate that is constant linear in $K$, the number of stochastic samples, when the FS-CEVF to be optimized is strongly convex \cite{lian2017finite}. Recently, several extensions and variants of the C-SVRG method have been proposed. Liu et al. \cite{liu2017variance} investigated compositional variants of SVRG methods  that can handle non-convex compositions of objective functions that include inner and outer finite-sum functions.
Yu et al. \cite{yufast} incorporated the alternating direction method of multipliers (ADMM) method \cite{boyd2011distributed} into C-SVRG to obtain com-SVR-ADMM to accommodate compositional objective functions with linear constraints. They also showed  that com-SVR-ADMM has a linear convergence rate when the compositional objective function is strongly convex and Lipschitz smooth. Others \cite{huo2017accelerated,lin2018improved} have incorporated proximal gradient descent techniques into C-SVRG to handle objective functions that include a non-smooth convex regularization penalty. Despite the differences in motivations behind the various extensions of C-SVRG and their mathematical formulations, the query complexity per iteration of gradient-based update remains unchanged from that of C-SVRG. Specifically, C-SVRG methods make two oracle queries for every targeted component function to estimate gradient per update iteration.

% \subsubsection*{Overview}
In light of the preceding observations, it is tempting to consider compositional extensions of stochastic average gradient (SAG) \cite{schmidt2017minimizing,schmidt2012} which offers an attractive alternative to SVRG. SAG achieves $O(1/K)$ convergence rate when the objective function is convex and a linear convergence rate when the objective function is strongly-convex \cite{schmidt2012,schmidt2017minimizing}. While in the general setting, the memory requirement of SAG  is $O(np)$ where $n$ is the number of data samples and $p$ is the dimensionality of the space, in many cases, it can be reduced to $O(n)$ by exploiting problem structure \cite{schmidt2017minimizing}. We introduce the compositional stochastic average gradient (C-SAG) method to minimize composition of finite-sum functions. C-SAG is a natural extension of SAG to the compositional setting. Like SAG, C-SAG estimates gradient by incorporating memory of previous gradient information. We present theoretical analyses of C-SAG which show that C-SAG, like SAG, and C-SVRG, achieves a linear convergence rate when the objective function is strongly convex. However, C-SAG achieves lower oracle query complexity per iteration than C-SVRG. We present results of experiments  showing that C-SAG converges substantially faster than full gradient (FG), as well as C-SVRG.

The rest of the paper is organized as follows. Section 2 introduces the FS-CEVF optimization problem with illustrative examples. Section 3 describes the proposed C-SAG algorithm. Section 4 establishes the  convergence rate of C-SAG. Section 5 presents results of experiments that compare  C-SAG with C-SVRG. Section 6 concludes with a summary and discussion.

\section{Finite-Sum Composition of Expected Values Optimization}
In this section, we introduce some of the key definitions and illustrate how some machine learning problems naturally lead to FS-CEVF optimization problems. 

\subsection{Problem Formulation}
A {\bf finite-sum composition of expected value function}  (FS-CEVF) optimization problem \cite{lian2017finite} takes the form: 
\begin{equation}
	\label{objFunc}
    \begin{aligned}
		&\min_{x} f(x):=F\circ G(x)=F(G(x))\\
        &G(x)=\frac{1}{m}\sum_{j=1}^m G_j(x), \blank \blank F(y)=\frac{1}{n}\sum_{i=1}^n F_i(y).
	\end{aligned}
\end{equation}
where the inner function, $G:\R^p \rightarrow \R^q$, is the empirical mean of $m$ component functions, $G_j:\R^p \rightarrow \R^q$, and the outer function, $F:\R^q \rightarrow \R$, is the empirical mean of $n$ component functions, $F_i:\R^q \rightarrow \R$. 

\subsection{Example: Statistical Learning}
Estimation of sparse additive models (SpAM) \cite{SpAM} is an important statistical learning problem. Suppose we are given a collection of $d$-dimensional input vectors $x_i=\left( x_{i1}, x_{i2}, \ldots, x_{i,d} \right)^T \in \R ^d$, and associated responses $y_i$. Suppose that $y_i = \alpha + \sum_{j=1}^d h_j(x_{ij})+\epsilon_i$ for each pair $\left( x_i, y_i \right)$. In the equation, $h_j : \R \rightarrow \R$ is a feature extractor  and $\epsilon_i$ is  zero-mean noise. It is customary to set $\alpha$ to zero by first subtracting from each data sample,  the mean of the data samples. The feature extractors $h_j$ can in general be non-linear functions. SpAM estimates the feature extractor functions by solving the following minimization problem:
\begin{equation*}
	\min_{h_j \in \mathbb{H}_j, j=1, \ldots, d} \frac{1}{n}\sum_{i=1}^n \left(y_i - \sum_{j=1}^d h_j(x_i) \right)^2,
\end{equation*}
where $\mathbb{H}_j$ is a pre-defined set of  functions chosen to ensure  that the model is identifiable. In many  machine learning applications, the feasible sets $\mathbb{H}_j$ are usually compact and the feature functions $h_j$'s are continuously differentiable. In such scenario, it is straightforward to formulate the objective function in the form of Eq.(\ref{objFunc}). 

In many practical applications of machine learning, $h_j$'s are assumed to be linear. In this case, the model is simplified to $y_i = w^T x_i + \epsilon_i$, where $w$ is a coefficient vector. Penalized variants of regression models (e.g. \cite{amini2008high}, \cite{lasso}, \cite{yuan2007model}) are often preferred to enhance stability and to induce sparse solutions. Of particular interest is the $l_1$-penalized regression, which is well-known by the LASSO estimator \cite{lasso} which can be written as follows:
\begin{equation*}
	w^* = arg\min_w \frac{1}{n}\sum_{i=1}^n \left(y_i - w^T x_i \right)^2 + \lambda \V w \V_1.
\end{equation*}
The objective function can be formulated as an FS-CEVF problem in Eq.(\ref{objFunc}):
\begin{alignat*}{2}
    &G_j(w) = \left( \lambda\vert w_j \vert, w_j\cdot x_{ij} \right)^T \in \R ^2\\
    &z = \frac{1}{d}\sum_{j=1}^d G_j(w) = \left( \frac{\lambda}{d}\V w \V_1, \frac{1}{d} w^T x_i \right)^T \in \R ^2\\
    &F_i(z) = \left( y_i - d\cdot z_2 \right)^2 + d\cdot z_1\\
    \Rightarrow \blank &f := \frac{1}{n}\sum_{i=1}^n F_i\left( \frac{1}{d} \sum_{j=1}^d G_j(w) \right) = \frac{1}{n}\sum_{i=1}^n \left(y_i - w^T x_i \right)^2 + \lambda \V w \V_1,
\end{alignat*}
where $z_i$ denotes the $i$-th element in vector $z$.

\subsection{Example: Reinforcement Learning}
Policy evaluation in reinforcement learning  \cite{sutton1998reinforcement}  presents an instance of FS-CEVF optimization. Let $S$ be a set of states, $A$ a set of actions, $\pi$: $S \rightarrow A$ a policy that maps states into actions, and $r: S \times A \rightarrow \R$ a reward function, Bellman equation for the value of executing policy  $\pi$ starting in some state $s \in S$ is given by:
\begin{equation*}
	V^\pi (s) = \E_\pi \left\lbrace r_{s,s'} +\gamma V^\pi (s') \vert s \right\rbrace, \forall s, s' \in S,
\end{equation*}
where  $r_{s,s'}$ denotes the reward received upon transitioning from $s$ to $s'$, and $0 \leq \gamma <1$ is a discounting factor.  The goal of reinforcement  learning is to find an  optimal policy $\pi^{\star}$ that maximizes $V^\pi (s)$ $\forall s \in S$. 

The Bellman equation becomes intractable for moderately large $S$. Hence, in practice it is often useful to approximate $V^\pi (s)$ by a suitable parameterized function. For example, $V^\pi (s) \approx \phi_s^T w$ for some $w\in \R^d$, where $\phi_s \in \R^d$ is a $d$-dimensional compact  representation of $s$. In this case, reinforcement learning reduces to finding an optimal $w^*$:
\begin{equation*}
	w^* = arg\min_w \frac{1}{\vert S \vert}\sum_{s}\left( \phi_s^T w - q_{\pi,s'}(w) \right)^2,
\end{equation*}
where $q_{\pi,s'}(w) = \sum_{s'} P^\pi_{ss'}\left( r_{s,s'}+\gamma \phi_{s'}^T w \right)$, $P^\pi_{ss'}$ is the state transition probability from $s$ to $s'$. The objective can be formulated as an FS-CEVF problem in Eq.(\ref{objFunc}):
\begin{alignat*}{2}
	&G_j(w) = \left(w,P^\pi_{sj}\left( r_{s,j}+\gamma \phi_{j}^T w\right)  \right)^T \in \R^{d+1}\\
    &z = \frac{1}{\vert S \vert} \sum_{j=1}^{\vert S \vert} G_j(w) = \left( w, \frac{1}{\vert S \vert}\sum_{j=1}^{\vert S \vert} P^\pi_{sj}\left( r_{s,j}+\gamma \phi_{j}^T w\right)  \right)^T \in \R^{d+1}\\
    &F_i(z) = \left( \phi_i^T z_{1:d} - |S|\cdot z_{d+1} \right)^2\\
    \Rightarrow  \blank &f := \frac{1}{\vert S \vert}\sum_{i=1}^{\vert S \vert} F_i \left( \frac{1}{\vert S \vert} \sum_{j=1}^{\vert S \vert} G_j(w) \right) = \frac{1}{\vert S \vert}\sum_{i=1}^{\vert S \vert}\left( \phi_i^T w - q_{\pi,s'}(w) \right)^2.
\end{alignat*}

\subsection{Example: Mean-Variance Portfolio Optimization}
Portfolio optimization characterizes human investment behavior. For each investor, the goal is to maximize the overall  return on investment while minimizing the investment risk. Given $N$ assets, the objective function for the mean-variance portfolio optimization problem can be specified as:
\begin{equation}
	\max_x \frac{1}{n}\sum_{i=1}^{n}<r_i,x>-\frac{1}{n}\sum_{i=1}^{n}\left(<r_i,x>- \frac{1}{n}\sum_{j=1}^{n}<r_j,x>\right)^2,
\end{equation}
where $r_i \in R^N$ is the reward vector observed at time point $i$, where $i$ ranges from $1,2,...,n$. The investment vector $x$ has dimensionality $R^N$ and is the amount invested in each asset.  To this end,  overall profit is described as the mean value of the inner product of the investment vector and the observed reward vectors. The variance of the return is used as a  measure of risk. To this end,  overall profit is described as the mean value of the inner product of the investment vector and the observed reward vectors. The variance of the return is used as a measure of risk.

The objective can be formulated as an instance of FS-CEVF optimization problem. First, we change the sign of the maximization problem and turn it into a minimization problem. Next, specify the function $G_j(x)$ as:
\begin{equation}
	G_j(x) = \left(
    \begin{tabular}{c}
		$x$\\
        $<r_j,x>$
    \end{tabular}
    \right) \in \R^{N+1} \textnormal{, } j =1,\ldots,n.
\end{equation}
The output is a vector in $R^{N+1}$, where the investments are encoded by first $N$ dimensions and the inner product of the investment vector and the reward vector at time point $j$ is stored in the $(N+1)$-th dimension. Setting $y = \frac{1}{m} \sum_{j=1}^m G_j(x)$, we specify the function $F_i(y)$ as
\begin{equation}
	F_i(y) = -y_{N+1} + \left( <r_i,y_{1:N}> - y_{N+1} \right)^2 \in \R \textnormal{, } i =1,\ldots,n.
\end{equation}
In the equation, $y_{N+1}$ indicates the value of the $(N+1)$-th component of the vector $y$, and $y_{1:N}$ refers to the first to the $N$-th element of the vector $y$.

\section{Compositional Stochastic Average Gradient Method}
In this section, we describe the proposed  {\bf compositional stochastic average gradient (C-SAG)} method for optimizing FS-CEVF objective functions.

\subsection{Stochastic average gradient algorithm}
Because compositional stochastic average gradient (C-SAG) algorithm is an extension of the stochastic average gradient method (SAG) \cite{schmidt2012,schmidt2017minimizing}, we begin by briefly reviewing SAG. SAG designed to optimize the sum of a finite number of smooth optimization functions:
\begin{equation}
	\label{fs_obj}
	\min_x f(x) = \frac{1}{n}\sum_{i=1}^n F_i(x)
\end{equation}
The SAG iterations take the form 
\begin{equation}
	x^{k} = x^{k-1} - \frac{\alpha_k}{n}\sum_{i=1}^n y_i^k,
\end{equation}
where $\alpha_k$ is the learning rate and a random index $i_k$ is selected at each iteration during which we set
\begin{equation}
	y_i^k =
    \begin{aligned}
    	\begin{cases}
    		F_i'(x^{k-1})  &\textnormal{if } i = i_k \\
        	y_i^{k-1}  &\textnormal{otherwise.}
		\end{cases}
    \end{aligned}
\end{equation}
Essentially, SAG maintains in memory, the gradient  with respect to each function in the sum of functions being optimized. At each iteration, the gradient value of only one such function is computed and updated in memory.  The SAG method estimates the overall gradient at each iteration by averaging the gradient values stored in memory. Like stochastic gradient (SG) methods \cite{kiefer1952stochastic,robbins1985stochastic}, the cost of each iteration is independent of the number of functions in the sum. However, SAG has improved  convergence rate compared to conventional SG methods, from $O(1/\sqrt[]{K})$ to $O(1/K)$ in general, and from the sub-linear $O(1/K)$ to a linear convergence rate of the form $O(\rho^K)$ for $\rho < 1$ when the objective function in Eq.(\ref{fs_obj}) is strongly-convex.

\subsection{From SAG to C-SAG}
C-SAG extends SAG to accommodate a finite-sum compositional objective function. Specifically, C-SAG maintains in memory, the items needed for updating the parameter $x^{k-1}$ to $x^{k}$. At each iteration, the relevant indices are randomly selected and the relevant gradients are computed and updated in the memory.

\subsubsection{Notations}
We denote by $z_i$ the $i$-th component of vector $z$. We denote the value of the parameter $x$ at the $k$-th iteration by $x^k$. Given a smooth function $H(x):\R^N \rightarrow \R^M$, $\partial H$ denotes the Jacobian of $H$ defined as:
\[
	\partial H:=\frac{\partial H}{\partial x}=\left(
    \begin{tabular}{c c c}
    	$\frac{\partial [H]_1}{\partial [x]_1}$ &$\cdots$ &$\frac{\partial [H]_1}{\partial [x]_N}$\\
        $\vdots$ &$\ddots$ &$\vdots$\\
        $\frac{\partial [H]_M}{\partial [x]_1}$ &$\cdots$ &$\frac{\partial [H]_M}{\partial [x]_N}$
    \end{tabular}\right).
\]
The value of the Jacobian evaluated at $x_k$ is denoted by $\partial H(x_k)$. For function $h:\R^N \rightarrow \R$, the gradient of $h$ is defined by
\[
	\begin{aligned}
		\nabla h(x)&=\left( \frac{\partial h(x)}{\partial x} \right)^T\\
        			&=\left( \frac{\partial h}{\partial [x]_1}, \cdots ,\frac{\partial h}{\partial [x]_N} \right)^T
	\end{aligned}
\]
The gradient of a composition function $f=F(G(x))$ is given by chain rule:
\begin{equation}
	\label{chainrule}
	\nabla f(x)=\left( \partial G(x) \right)^T\nabla F(G(x)),
\end{equation}
where $\nabla F(G(x))$ stands for the gradient of $F$ evaluated at $G(x)$. Evaluating the gradient requires three key terms, namely, the Jacobian of the inner function $\partial G(\cdot)$, value of the inner function $G(\cdot)$, and the gradient of the outer function $\nabla F(\cdot)$.

We use $\hat{f}$ to denote an estimate of the function $f$, and analogously, $\partial \hat{H}$ and $\nabla \hat{h}(x)$ to denote estimates of Jacobian matrices $\partial H$ and gradient vectors $\nabla h(x)$ respectively. To minimize notational clutter, we use $G_j^k$ to denote $G_j(x^k)$, $\partial G_j^k$ to denote $\partial G_j(x^k)$, and $\nabla F_i^k$ to denote $\nabla F_i(x^k)$. Given a set $A$, we use $\vert A \vert$ to denote the number of elements in the set. We use $\E$ to denote expectation.

\subsubsection{C-SAG Algorithm}

We define memories $J_j$ for storing the Jacobian of the inner function $\partial G_j(\cdot)$, $V_j$ for storing the value of the inner function $G_j(\cdot)$, and $Q_i$ for storing the gradient of the outer function $\nabla F_i(\cdot)$.
At iteration $k$, we randomly select indices $j_k$, $i_k$, and a mini-batch $A_k$. We update the $J_j$'s as follows:
\begin{equation}
	\label{evaluateJ}
	J_j^k = 
    \begin{aligned}
    	\begin{cases}
    		\partial G_j(x^{k-1}) &\textnormal{if } j = j_k\\
        	J_j^{k-1} &\textnormal{otherwise.}
    	\end{cases}
    \end{aligned}
\end{equation}
We update the $V_j$'s as follows:
\begin{equation}
	\label{evaluateV}
	V_j^k = 
    \begin{aligned}
    	\begin{cases}
    		G_j(x^{k-1}) &\textnormal{if } j \in A_k\\
            V_j^{k-1} &\textnormal{otherwise.}
    	\end{cases}
    \end{aligned}
\end{equation}
We update the $Q_i$'s as follows:
\begin{equation}
	\label{evaluateQ}
	Q_i^k = 
    \begin{aligned}
    	\begin{cases}
    		\nabla F_i\left( \frac{1}{m} \sum_{j=1}^m V_j^k \right) &\textnormal{if } i = i^k\\
            Q_i^{k-1} &\textnormal{otherwise.}
    	\end{cases}
    \end{aligned}
\end{equation}
Finally, the update rule is obtained by substituting Eq.(\ref{evaluateJ})-Eq.(\ref{evaluateQ}) into Eq.(\ref{chainrule}):
\begin{equation}
	\label{Equpdate}
	\begin{aligned}
		&x^{k} = x^{k-1} - \alpha \nabla \hat{f}^{k}\\
        &\nabla \hat{f}^{k} = (\partial \hat{G}(x^{k-1}))^T \nabla \hat{F}(\hat{G}(x^{k-1}))\\
        &\partial \hat{G}(x^{k-1}) = \frac{1}{m}\sum_{j=1}^m J_j^k\\
        &\hat{G}(x^{k-1}) = \frac{1}{m}\sum_{j=1}^m V_j^k\\
        &\nabla \hat{F}(\hat{G}(x^{k-1})) = \frac{1}{n} \sum_{i=1}^n Q_i^k\\
	\end{aligned}
\end{equation}

\begin{algorithm}[t]
  	\caption{C-SAG}
    \label{alg:c_sag}
  	\begin{algorithmic}[1]
      	\State \textbf{Require:} $K$ (update period), $\alpha$ (step size), $S$ (maximum number of training iterations), $a$ (mini batch size)
      	\While{$true$}
      		\State $\backslash\backslash$ Refresh memory, evaluate full gradient
        	\State $J_j = \partial G_j(\tilde{x})$ for $j = 1,...,m$ \Comment{m queries}
        	\State $V_j = G_j(\tilde{x})$ for $j = 1,...,m$ \Comment{m queries}
        	\State $Q_i = \nabla F_i(G(\tilde{x}))$ for $i = 1,...,n$ \Comment{n queries}
            \State $x_{0}=\tilde{x}-\alpha \nabla f(\tilde{x})$
        	\For{$k = 1$ to $K$}
        		\State Uniformly select $j_k$ from $\left[1,2,...,m\right]$, mini-batch $A_k$ from $\left[1,2,...,m\right]$ and $i_k$ from $\left[1,2,...,n\right]$
            	\State $\backslash\backslash$Estimate $\nabla \hat{f}^{k}$
                \State update memory $J$ by Eq.(\ref{evaluateJ}) \Comment{1 query}
                \State update memory $V$ by Eq.(\ref{evaluateV}) \Comment{$\vert A \vert$ queries}
                \State update memory $Q$ by Eq.(\ref{evaluateQ}) \Comment{1 query}
            	\State estimate $\partial \hat{G}(x^{k-1}) = \frac{1}{m} \sum_{j=1}^m J_j^k$
            	\State estimate $\hat{G}(x^{k-1}) = \frac{1}{m} \sum_{j=1}^m V_j^k$
            	\State estimate $\nabla \hat{F}(\hat{G}(x^{k-1})) = \frac{1}{n} \sum_{i=1}^n Q_i^k$
            	\State $\nabla \hat{f}^{k} = (\partial \hat{G}(x^{k-1}))^T \nabla \hat{F}(\hat{G}(x^{k-1}))$
            	\State $x^{k}=x^{k-1}-\alpha \nabla\hat{f}^{k}$
        	\EndFor
            \State $\tilde{x} = x^{K} $
       		\If{converged or $S$ reached}
            \State \Return $x^{K}$
        	\EndIf
      	\EndWhile
  	\end{algorithmic}
\end{algorithm}
In addition, because the  gradient estimate is biased in the case of FS-CEVF objective function \cite{lian2017finite}, to improve the stability of the algorithm, we use a refresh mechanism to evaluate the exact full gradient periodically and update $J$, $V$, and $Q$. The frequency of refresh is controlled by a pre-specified parameter $K$. The complete algorithm is shown in Algorithm \ref{alg:c_sag}.

\subsection{Oracle Query Complexity of C-SAG}

In the algorithm, we define the cost of an oracle query as the unit cost of computing $\partial G_j(\cdot)$, $G_j(\cdot)$, and $\nabla F_i(\cdot)$. Evaluating the exact full gradient, such as line 4, 5, and 6 in Algorithm \ref{alg:c_sag}, takes $m$, $m$, and $n$  oracle queries, for $\partial G(\cdot)$,  $G(\cdot)$, and $\nabla F(\cdot)$ respectively, yielding an oracle query complexity of $O(2m+n)$. On the other hand, lines 11 to 13 of  Algorithm \ref{alg:c_sag} estimate gradient by incorporating the memory of previous steps, yielding $2+\vert A_k \vert$ queries at each iteration, which is independent of the number of component functions.

We proceed to  compare the oracle query complexity per iteration between C-SAG with the state-of-the-art C-SVRG methods C-SVRG-1 and C-SVRG-2 \cite{lian2017finite}.  C-SVRG-1 requires $2A+4$ oracle queries in each iteration (where $A$ is the mini-batch size used by C-SVRG-1) except for the reference update iteration, in which exact full gradient is evaluated which requires $2m+n$ queries. The reference update iteration in C-SVRG methods is equivalent corresponds to the memory refreshing mechanism in C-SAG. In addition, C-SVRG-2 takes $2A+2B+2$ queries per iteration (where $B$ is a second mini-batch size used by C-SVRG-2) except for the reference update iteration. On the other hand, the proposed C-SAG takes $A+2$ queries per iteration except for the refresh iteration. The number of queries required by C-SAG during memory refresh iterations are the same as those for C-SVRG-1 and C-SVRG-2 during their reference update iterations. We conclude that C-SAG, in general, incurs only half the oracle query complexity of C-SVRG-1, and less than half that of C-SVRG-2.

\section{Convergence Analysis of C-SAG}
We proceed to establish the convergence and the convergence rate of C-SAG.  Because of  space constraints, the details of the proof are relegated to the  Appendix. 

Suppose the objective function in Eq.(\ref{objFunc}) has a minimizer $x^*$. In what follows, we make the same assumptions regarding the functions $f(\cdot)$, $F(\cdot)$, and $G(\cdot)$ as those used in \cite{lian2017finite} to establish the convergence of C-SVRG methods.  We assume:
\begin{itemize}
\item[(a)]
{\bf Strongly Convex Objective:}
$f(x)$ in Eq.(\ref{objFunc}) is strongly convex with parameter $\mu _f$:
\begin{equation*}
	\label{strongConvex}
    f(y) \geq f(x) + <\nabla f(x),y-x>+\frac{\mu _f}{2}\V y-x \V^2, \blank \forall x,y. \tag{$\Delta_1$}
\end{equation*}
\item[(b)]
{\bf Bounded Jacobian of Inner Functions}
We assume that the Jacobian of all inner component functions are upper-bounded by $B_G$,
\begin{equation*}
	\label{boundedIn}
    \V \partial G_j(x) \V \leq B_G, \blank \forall x, \forall j \in \left\lbrace 1,\ldots, m \right\rbrace. \tag{$\Delta_2$}
\end{equation*}
Using this inequality, we can bound the estimated Jacobian $J_j^{k}$. Because $J_j^{k} = \partial G_j^{k'}$ for some $k' < k$, we have:
\begin{equation*}
	\label{boundedJ}
    \V J_j \V \leq B_G, \blank \forall j \in \left\lbrace 1,\ldots, m \right\rbrace. \tag{$\Delta_3$}
\end{equation*}
\item[(c)]
{\bf Lipschitz Gradients:}
We assume that there is a constant $L_F$ such that $\forall x, \forall y, \forall i\in \left\lbrace 1, \ldots n \right\rbrace$, the following holds:
\begin{equation*}
	\label{lipschitzF}
    \V \nabla F_i(x) - \nabla F_i(y)\V \leq L_F\V x-y \V \tag{$\Delta_4$},
\end{equation*}
\end{itemize}
Armed with the preceding assumptions, we proceed to establish the convergence of C-SAG. 
\begin{theorem}\textnormal{(Convergence of C-SAG algorithm)}.
\textnormal{For the proposed C-SAG algorithm, we have}:\\
	\begin{equation*}
		\frac{1}{K}\sum_{k=0}^{K-1}\E \V x_k - x^* \V^2
        \leq \blank \frac{\gamma_1}{\gamma_2} \E \V \tilde{x}-x^* \V^2.
	\end{equation*}
where\\
	\begin{alignat*}{2}
		& \gamma_1 = \frac{1}{K}+\left( n\sigma_1 + 3m\left( 1-(\frac{a}{m})^2 \right)\sigma_2 \right)B_G^4 L_F^2\\
        & \gamma_2 = \alpha \mu_f - \left( \frac{32\alpha (m-a)}{m\mu_f} + 3a(2-\frac{a}{m}) \sigma_2 \right) B_G^4 L_F^2,
	\end{alignat*}
and\\
	\begin{equation*}
		\begin{alignedat}{2}
    		&\begin{alignedat}{2}
        		\sigma_1 = &9\alpha^2\left((m-1)^2(1-\frac{1}{n})(n + 2) + (n-1)(4m-3)\right)+16\alpha n \frac{(m-1)^2(1-\frac{1}{n})^2 + (1-\frac{1}{n})^2}{\mu_f} \\
        	\end{alignedat}\\
			&\begin{alignedat}{2}
        		\sigma_2 = &\frac{9\alpha^2}{m}\left( (m-1)^2(2n-1)+n+4n(m-1) \right)+16\alpha \frac{(1-\frac{1}{m})^2 m + 16(m-a)}{m^2\mu_f}.
        	\end{alignedat}
		\end{alignedat}
	\end{equation*}
\end{theorem}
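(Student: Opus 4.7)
The plan is to start from the C-SAG update rule $x^k = x^{k-1} - \alpha \nabla \hat{f}^k$ and expand
\[ \|x^k - x^*\|^2 = \|x^{k-1} - x^*\|^2 - 2\alpha\langle \nabla \hat{f}^k, x^{k-1} - x^*\rangle + \alpha^2 \|\nabla \hat{f}^k\|^2. \]
Taking the conditional expectation with respect to the random choices $j_k$, $A_k$, $i_k$, I would add and subtract the true gradient $\nabla f(x^{k-1})$ in the cross term. Since $x^*$ is the minimizer, the strong convexity assumption $(\Delta_1)$ yields $\langle \nabla f(x^{k-1}), x^{k-1} - x^*\rangle \geq \mu_f \|x^{k-1} - x^*\|^2$, which supplies the contractive factor $-2\alpha\mu_f\|x^{k-1}-x^*\|^2$ that eventually appears as the leading $\alpha\mu_f$ in $\gamma_2$. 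The leftover inner product involving the bias $\nabla \hat{f}^k - \nabla f(x^{k-1})$ is split by Young's inequality: part is absorbed back into the contractive factor (explaining the subtracted $32\alpha(m-a)/(m\mu_f)$-type term in $\gamma_2$) and part is relegated to a bias-norm squared that must be controlled.

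The central step is decomposing the bias via the chain-rule identity
\[ \nabla \hat{f}^k - \nabla f(x^{k-1}) = (\partial \hat{G} - \partial G)^T \nabla \hat{F}(\hat{G}) + (\partial G)^T(\nabla \hat{F}(\hat{G}) - \nabla F(G)), \]
and then further splitting the second summand as $\nabla \hat{F}(\hat{G}) - \nabla F(\hat{G})$ plus $\nabla F(\hat{G}) - \nabla F(G)$; the latter is controlled by $(\Delta_4)$ as $L_F\|\hat{G}-G\|$. Using $(\Delta_2)$ and $(\Delta_3)$ to bound all operator norms by $B_G$, the whole bias norm becomes a weighted sum of per-component memory errors $\|J_j^k - \partial G_j(x^{k-1})\|$, $\|V_j^k - G_j(x^{k-1})\|$, and $\|Q_i^k - \nabla F_i(G(x^{k-1}))\|$. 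Because each memory entry equals the corresponding quantity evaluated at some earlier iterate $x^{k'}$ with $k'<k$, each such error is further bounded (through $(\Delta_4)$ and the bounded-Jacobian property) by a multiple of $B_G^2 L_F \|x^{k'} - x^{k-1}\|$, producing the universal prefactor $B_G^4 L_F^2$ seen in both $\gamma_1$ and $\gamma_2$.

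The hard part, and the source of the combinatorial constants appearing in $\sigma_1$ and $\sigma_2$, is averaging these memory-lag terms $\|x^{k'} - x^{k-1}\|^2$ over the randomness of the indices. I would introduce auxiliary sequences for the expected per-component memory errors and establish one-step recurrences for them. Since $J_j$ is refreshed with probability $1/m$, $V_j$ with probability $a/m$, and $Q_i$ with probability $1/n$, a geometric age-of-memory argument gives factors like $(m-1)^2(1 - 1/n)$, $(1 - 1/n)^2$, and $(1 - (a/m)^2)$ that are visible in the statement. Combining with $\|x^{k'} - x^{k-1}\|^2 \leq 2\|x^{k'} - x^*\|^2 + 2\|x^{k-1} - x^*\|^2$ converts the bias budget into a sum of expected squared distances to $x^*$ only.

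Finally, I would sum the resulting per-iteration inequality from $k=0$ to $K-1$ and divide by $K$. The $\tfrac{1}{K}$ summand in $\gamma_1$ arises as the telescoped initial slack $\|x^0-x^*\|^2/K$. Because the outer loop of Algorithm~\ref{alg:c_sag} refreshes all three memories at the reference point $\tilde{x}$ and then sets $x^0$ by a single full-gradient step from $\tilde{x}$, both $\E\|x^0-x^*\|^2$ and all initial memory errors are bounded by a constant multiple of $\E\|\tilde{x}-x^*\|^2$, yielding the claimed inequality. The principal obstacle I anticipate is bookkeeping: juggling the three coupled memory streams with their independent refresh probabilities, and choosing the Young-inequality splits so that $\gamma_2>0$ for a suitable range of $\alpha$ while $\gamma_1$ retains exactly the stated form.
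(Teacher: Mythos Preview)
Your skeleton (square expansion, strong convexity for the signal term, Young splits for residuals, then sum over $k$ and use the refresh at $\tilde x$) is the right shape, but the way you propose to control the bias has a real gap. In your chain-rule decomposition the piece $(\partial\hat G-\partial G)^T\nabla\hat F(\hat G)$ forces you to bound
\[
\|J_j^k-\partial G_j(x^{k-1})\|=\|\partial G_j(x^{k'})-\partial G_j(x^{k-1})\|
\]
by a multiple of $\|x^{k'}-x^{k-1}\|$. That requires a Lipschitz constant for $\partial G_j$, and no such assumption is made: the hypotheses give only $\|\partial G_j(x)\|\le B_G$ ($\Delta_2$) and Lipschitz $\nabla F_i$ ($\Delta_4$). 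With boundedness alone the $J$-memory error is merely $O(B_G)$, not $O(\|x^{k'}-x^{k-1}\|)$, so your ``universal prefactor $B_G^4L_F^2$'' cannot be produced along this route and the contraction is lost.

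The paper avoids this by never comparing the memories to the current iterate. It writes each memory as its conditional mean plus a zero-mean fluctuation, e.g.\ $J_j^k=(1-\tfrac1m)J_j^{k-1}+\tfrac1m\partial G_j^{k-1}+z_j^k[\partial G_j^{k-1}-J_j^{k-1}]$ with $\E z^k=0$ (similarly $y^k,w^k$ for $V,Q$), expands the product $\partial\hat G^{\,T}\nabla\hat F(\hat G)$ into nine terms, and in the inner product with $x^{k-1}-x^*$ the zero-mean pieces vanish. All remaining quantities are funneled into two scalars centered at the \emph{optimum},
\[
T_0=\sum_i\|Q_i^{k-1}-\nabla F_i(G(x^*))\|^2,\qquad T_1=\sum_j\|V_j^k-G_j(x^*)\|^2,
\]
while the Jacobian discrepancies $G'^{k-1}-J^{k-1}$ are simply bounded by $4mB_G^2$ (they always sit next to a $z^k$ and a $T_0$ or $T_1$ factor, which supply the smallness). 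The refresh at $\tilde x$ then bounds $T_0$ and the relevant part of $T_1$ directly by $B_G^2L_F^2\|\tilde x-x^*\|^2$, and the nine-term expansion together with $\E\|z^k\|^2=1-\tfrac1m$, $\E\|w^k\|^2=1-\tfrac1n$ is exactly what generates the specific combinatorial constants in $\sigma_1,\sigma_2$. Your age-of-memory recursion, even if you added a Lipschitz-$\partial G$ hypothesis to close the gap, would not reproduce those constants.
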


\begin{proof}
The proof proceeds as follows: (i) We express $\E \V x^k - x^* \V^2$ in terms of $J^{k-1}$, $V^{k-1}$, $Q^{k-1}$, and $x^{k-1}$; (ii) We use the assumptions  (a) through (c) above  to bound $J^{k-1}$, $V^{k-1}$, and $Q^{k-1}$ in terms of $x^{k-1}$ and $\tilde{x}$; and (iii) We use the preceding bounds to establish the convergence of C-SAG, and the choice of parameters  (refresh frequency $1/K$, batch size $a$, and learning rate $\alpha$ etc.) that ensure convergence (See Appendix for details)
\end{proof}

\begin{corollary}\textnormal{(Convergence rate of C-SAG algorithm).
Suppose we choose the user-specified parameters of C-SAG such that:}
\begin{alignat*}{2}
	\begin{cases}
		& a > m \left( 1-\frac{\mu_f^2}{128 B_G^4 L_F^2} \right),\\
    	& \alpha < \min \left\lbrace \alpha_1,\alpha_2,\alpha_3 \right\rbrace,\\
    	& K > \frac{8}{\alpha \mu_f},
	\end{cases}
\end{alignat*}
\textnormal{where}
\begin{alignat*}{2}
	&\alpha_1 = \frac{m\left( \frac{\mu_f}{12a\left( 2-\frac{a}{m} \right) B_G^4 L_F^2} - 16\frac{(1-\frac{1}{m})^2 m + 16(m-a)}{m^2\mu_f}\right)}{9\left( (m-1)^2(2n-1)+n+4n(m-1) \right)}\\
    &\alpha_2 = \frac{\frac{\mu_f}{8n B_G^4 L_F^2} - 16 n \frac{(m-1)^2(1-\frac{1}{n})^2 + (1-\frac{1}{n})^2}{\mu_f}}{9\left((m-1)^2(1-\frac{1}{n})(n + 2) + (n-1)(4m-3)\right)} \\
    &\alpha_3 = \frac{m \left( \frac{\mu_f}{24m\left( 1-\left( \frac{a}{m} \right)^2 \right)B_G^4 L_F^2} - 16 \frac{(1-\frac{1}{m})^2 m + 16(m-a)}{m^2\mu_f} \right)}{9\left( (m-1)^2(2n-1)+n+4n(m-1) \right)},
\end{alignat*}
\textnormal{Then we have:}
\begin{alignat*}{2}
	&\frac{1}{K}\sum_{k=0}^{K-1}\E \V x_k - x^* \V^2 < \frac{3}{4} \E \V \tilde{x}-x^* \V^2.
\end{alignat*}
\end{corollary}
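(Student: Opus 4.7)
The plan is to reduce the corollary to a verification that the stated parameter choices force $\gamma_1/\gamma_2 < 3/4$, after which the stated bound follows from the preceding theorem by direct substitution. The key structural observation is that both $\sigma_1$ and $\sigma_2$ are quadratic in $\alpha$ with no constant term, i.e.\ each has the form $\alpha^2 C_a + \alpha C_b$ for coefficients depending only on $m$, $n$, $a$, $\mu_f$. Consequently $\sigma_j/\alpha = \alpha C_a + C_b$ is affine in $\alpha$, and the three thresholds $\alpha_1$, $\alpha_2$, $\alpha_3$ are precisely the values of $\alpha$ at which particular affine ``slice'' inequalities become tight. I would therefore split the argument into a lower bound on $\gamma_2$ with budget $\alpha\mu_f/2$ and an upper bound on $\gamma_1$ with budget $3\alpha\mu_f/8$.

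For the lower bound on $\gamma_2$, first I would rearrange the condition $a > m(1-\mu_f^2/(128 B_G^4 L_F^2))$ into $32(m-a)B_G^4 L_F^2/(m\mu_f) < \mu_f/4$, so that the first correction term inside $\gamma_2$ subtracts at most $\alpha\mu_f/4$ from $\alpha\mu_f$. Then I would use the fact that $\alpha_1$ is the unique solution of the affine equation $\alpha C_a + C_b = \mu_f/(12a(2-a/m)B_G^4 L_F^2)$ for the coefficients $C_a,C_b$ of $\sigma_2$, so $\alpha<\alpha_1$ yields $\sigma_2 < \alpha\mu_f/(12a(2-a/m)B_G^4 L_F^2)$ and thus $3a(2-a/m)\sigma_2 B_G^4 L_F^2 < \alpha\mu_f/4$. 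Subtracting the two contributions gives $\gamma_2 > \alpha\mu_f/2$.

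For the upper bound on $\gamma_1$, the condition $K > 8/(\alpha\mu_f)$ is just $1/K < \alpha\mu_f/8$. The thresholds $\alpha_2$ and $\alpha_3$ are engineered in exactly the same way as $\alpha_1$: $\alpha_2$ comes from equating the affine form for $\sigma_1/\alpha$ with $\mu_f/(8n B_G^4 L_F^2)$, and $\alpha_3$ comes from equating the affine form for $\sigma_2/\alpha$ with $\mu_f/(24m(1-(a/m)^2)B_G^4 L_F^2)$. Hence $\alpha<\alpha_2$ implies $n\sigma_1 B_G^4 L_F^2 < \alpha\mu_f/8$, and $\alpha<\alpha_3$ implies $3m(1-(a/m)^2)\sigma_2 B_G^4 L_F^2 < \alpha\mu_f/8$. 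Summing the three pieces of $\gamma_1$ yields $\gamma_1 < 3\alpha\mu_f/8$. Dividing then gives $\gamma_1/\gamma_2 < (3\alpha\mu_f/8)/(\alpha\mu_f/2) = 3/4$, and substituting into the theorem completes the proof.

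The hard part is purely bookkeeping: I would need to verify that each $\alpha_i$ is strictly positive (so that the hypothesis $\alpha < \min\{\alpha_1,\alpha_2,\alpha_3\}$ is non-vacuous), which amounts to checking that each of the three numerators in the definitions of the $\alpha_i$ is positive, and to keep straight which of the three offending $B_G^4 L_F^2$-weighted terms is being charged to which one-eighth or one-quarter slice of $\alpha\mu_f$. Once the correspondence between the three thresholds and the three correction terms is made explicit, the algebra is routine and the ratio $3/4$ drops out by construction.
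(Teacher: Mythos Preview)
Your proposal is correct and follows essentially the same approach as the paper: the paper also allocates the budget exactly as you describe, using the condition on $a$ and $\alpha<\alpha_1$ to bound the two correction terms in $\gamma_2$ by $\alpha\mu_f/4$ each (so $\gamma_2>\alpha\mu_f/2$), and the conditions $K>8/(\alpha\mu_f)$, $\alpha<\alpha_2$, $\alpha<\alpha_3$ to bound the three pieces of $\gamma_1$ by $\alpha\mu_f/8$ each (so $\gamma_1<3\alpha\mu_f/8$). The only point you explicitly flag that the paper glosses over is the positivity of each $\alpha_i$.
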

Corollary 1 implies a linear convergence rate \cite{lian2017finite}.

\section{Experimental evaluation of C-SAG}
We report results of comparison of C-SAG with two variants of C-SVRG \cite{lian2017finite}, C-SVRG-1 and C-SVRG-2, which are among the state-of-the-art methods for FS-CEVF optimization, as well as the classical full gradient (FG) method \cite{cauchy1847methode},  on the mean-variance portfolio optimization problem (See Section 2 for details).

Our experimental setup closely follows the setup in \cite{lian2017finite}. Synthetic data were drawn from  a multi-variate Gaussian distribution and absolute values of the data samples were retained. We generated two synthetic data sets (D1 and D2). D1 consists of 2000 time points, and 200 assets, i.e., $n=2000$, $N=200$; D2  consists of 5000 time points, and 300 assets, i.e., $n=5000$, $N=300$. We also controlled $\kappa_{cov}$, the condition number of the covariance matrix used in the multi-variate Gaussian distribution to generate the synthetic data. We compared our proposed algorithm to the classical FG method \cite{cauchy1847methode}, and two state-of-the-art algorithms, C-SVRG-1 \cite{lian2017finite} and C-SVRG-2 \cite{lian2017finite}. The initialization and the step sizes were chosen to be identical for all algorithms. The parameters are set to their default values in the implementation provided by the authors of \cite{lian2017finite}: The size of the mini-batch was set to 20 ($a=20$), the update period was set to 20 iterations ($K=20$), and the constant step size was set to 0.12 ($\alpha = 0.12$).

\begin{figure}[t]
	\begin{subfigure}{.5\textwidth}
    	\centering
    	\includegraphics[width=.8\linewidth]{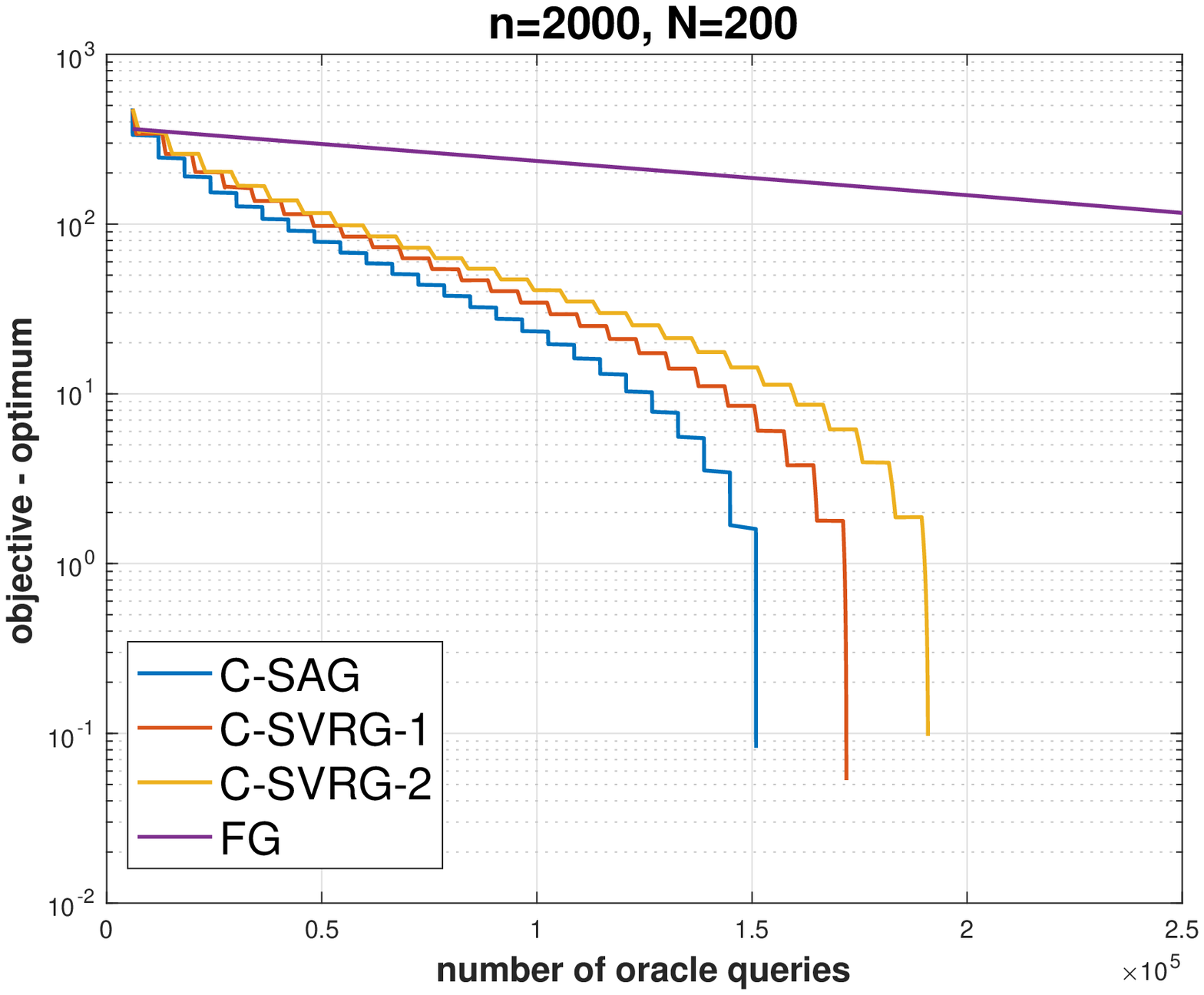}
    	\caption{$\kappa_{cov} = 20$}
    	\label{fig1:sfig1}
  	\end{subfigure}%
  	\begin{subfigure}{.5\textwidth}
    	\centering
    	\includegraphics[width=.8\linewidth]{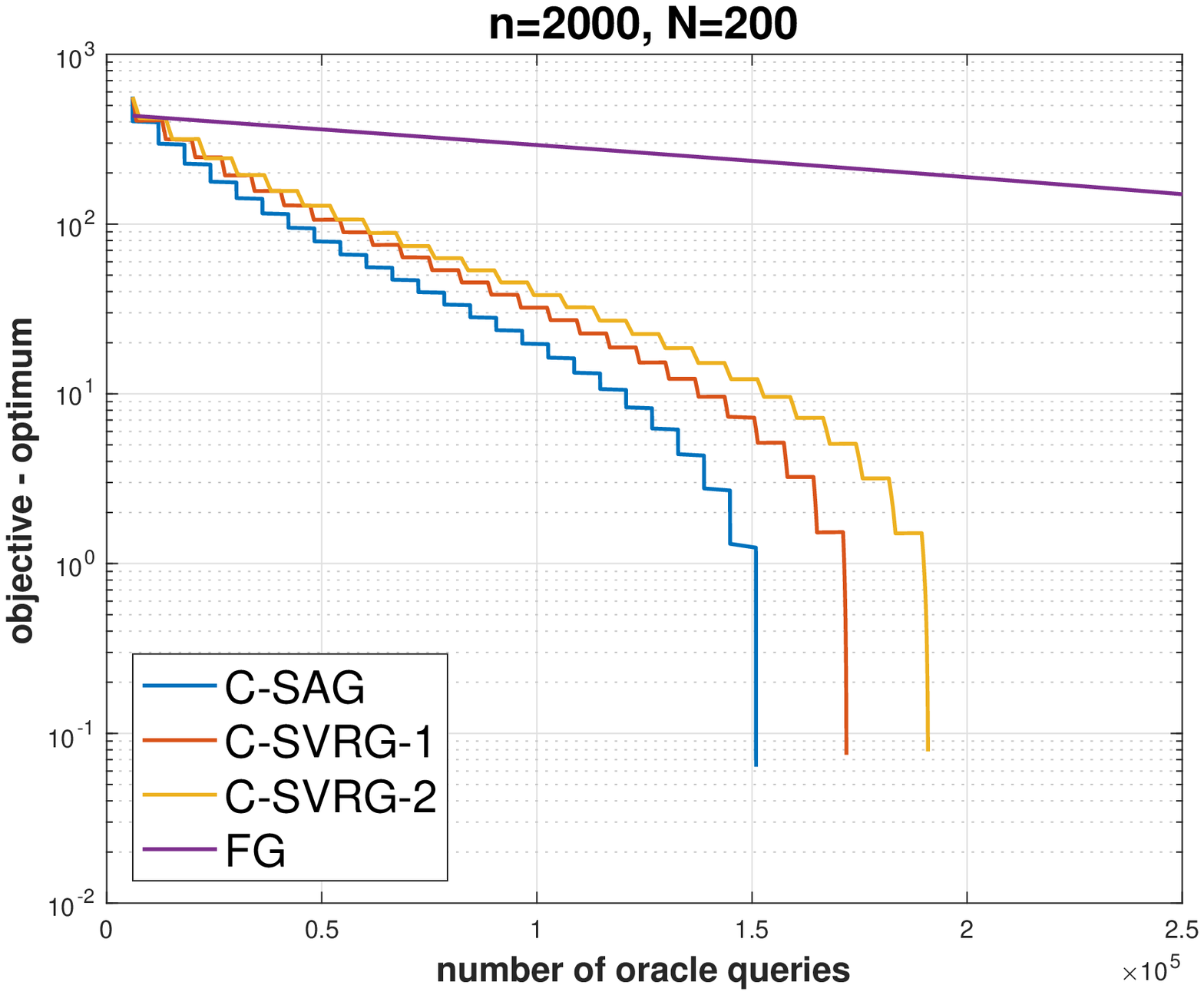}
    	\caption{$\kappa_{cov} = 100$}
    	\label{fig1:sfig2}
  	\end{subfigure}
	\caption{Mean-variance portfolio optimization on synthetic data D1 with 200 assets ($N=300$) and the number of time points $n=2000$. The logarithm of the objective value minus the optimum is plotted along the Y-axis and the number of oracle queries is plotted along the X-axis. The $\kappa_{cov}$ is the conditional number of the covariance matrix of the corresponding Gaussian distribution used to generate reward vectors.}
  	\label{fig1:fig}
\end{figure}
The experimental results comparing the different methods on data sets D1 and D2 are shown in Fig. 1 and Fig. 2 respectively.  The number of oracle queries is proportional to the runtime of the algorithm. Note that the sudden drop at the tail of each curve results from treating the smallest objective value in the convergence sequence, which is usually found at the end of the sequence, as the optimum value.
We observe that on both data sets, all stochastic gradient methods converge faster than full gradient (FG) method, and that  C-SAG achieves better (lower) value of the objective function  at each iteration. Although the three stochastic methods under comparison (C-SVRG-1, C-SVRG-2, and our method, C-SAG) all have linear convergence rates, C-SAG converges faster in practice by virtue of lower oracle query complexity per iteration as compared to C-SVRG-1 and C-SVRG-2.  

\begin{figure}
	\begin{subfigure}{.5\textwidth}
    	\centering
    	\includegraphics[width=.8\linewidth]{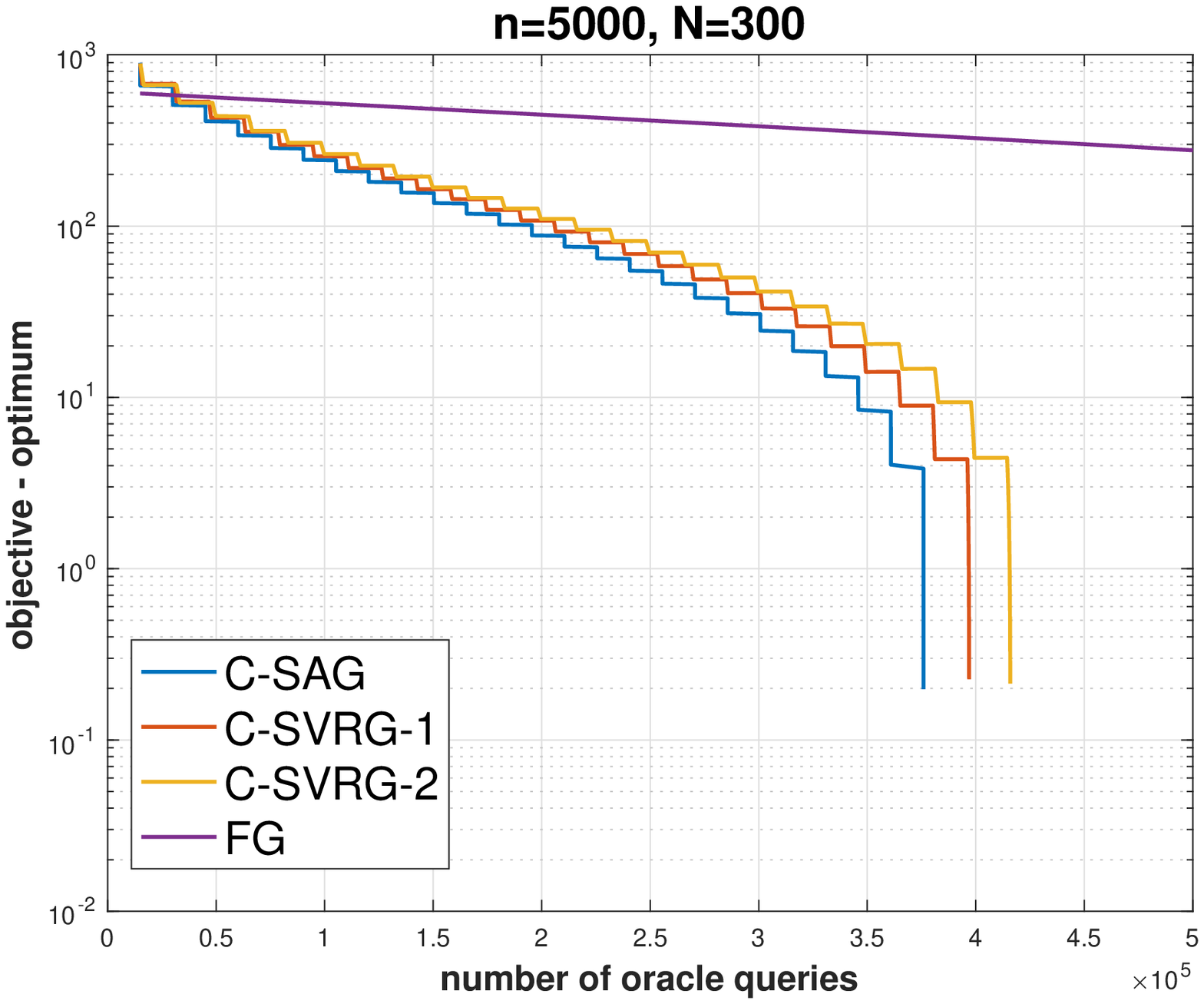}
    	\caption{$\kappa_{cov} = 20$}
    	\label{fig2:sfig1}
  	\end{subfigure}%
  	\begin{subfigure}{.5\textwidth}
    	\centering
    	\includegraphics[width=.8\linewidth]{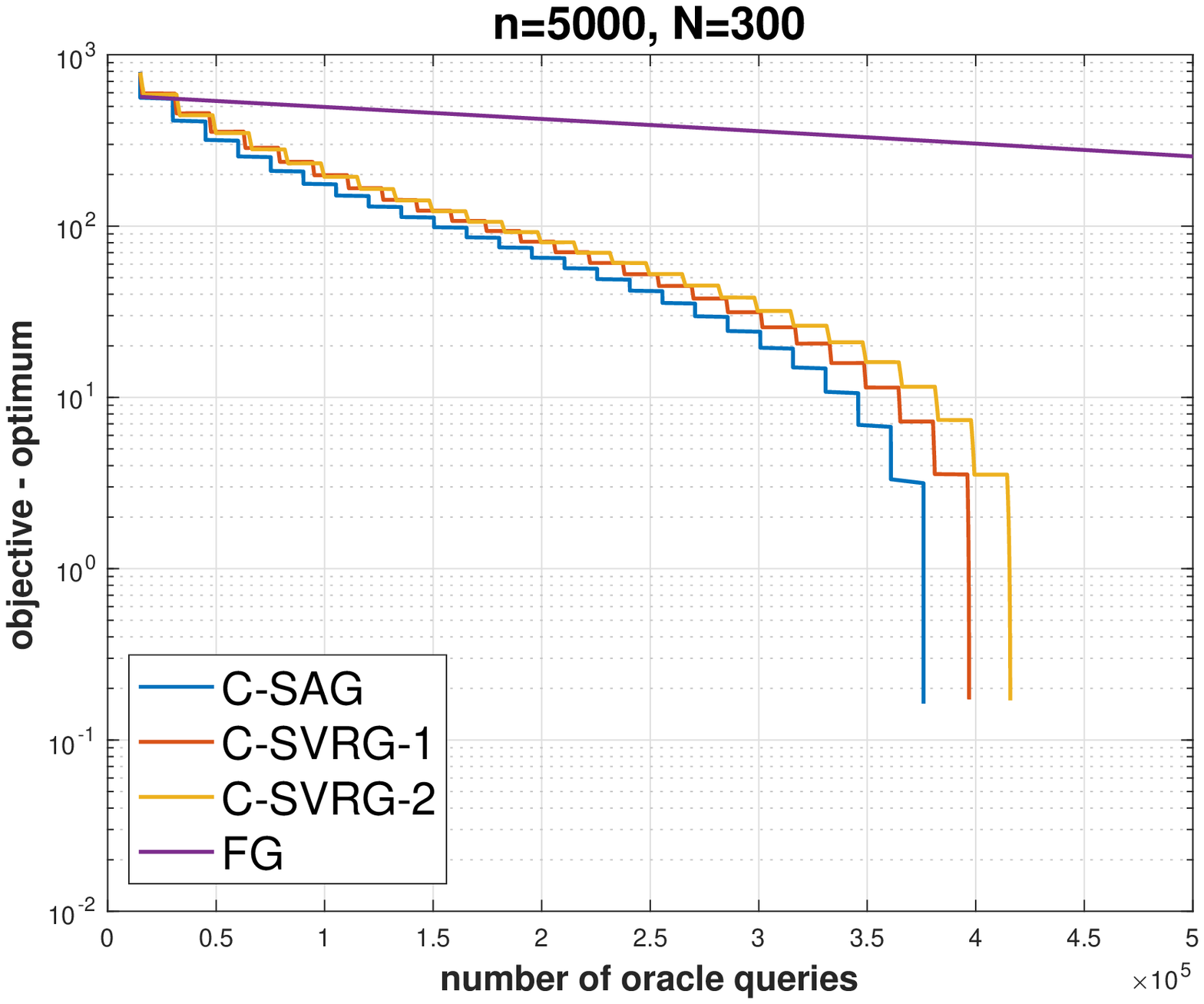}
    	\caption{$\kappa_{cov} = 100$}
    	\label{fig2:sfig2}
  	\end{subfigure}
	\caption{Mean-variance portfolio optimization on synthetic data D2 with 300 assets ($N=200$) and the number of time points $n=5000$. The logarithm of the objective value minus the optimum is plotted along the Y-axis and the number of oracle queries is plotted along the X-axis. The $\kappa_{cov}$ is the conditional number of the covariance matrix of the corresponding Gaussian distribution used to generate reward vectors.}
  	\label{fig2:fig}
\end{figure}

C-SAG has three user-specified parameters.  It is especially interesting as to how the update period, $K$, and the size of the mini-batch, $a$,  impact the convergence of C-SAG. The parameter $K$ controls the frequency at which  the full gradient is computed, and hence represents a trade-off between the accuracy and the computational efficiency of C-SAG. We experimented with C-SAG with $K$ set to 10, 20, 50, and 200 iterations. The results of this experiment are summarized in Fig. 3(a). As expected, C-SAG converges faster for larger values of $K$ (i.e., the time elapsed between consecutive memory refresh events or full gradient computation is longer). However, for larger values of $K$, C-SAG may fail to converge to the optimum value of the objective function. 
\begin{figure}[t]
	\begin{subfigure}{.5\textwidth}
    	\centering
    	\includegraphics[width=.8\linewidth]{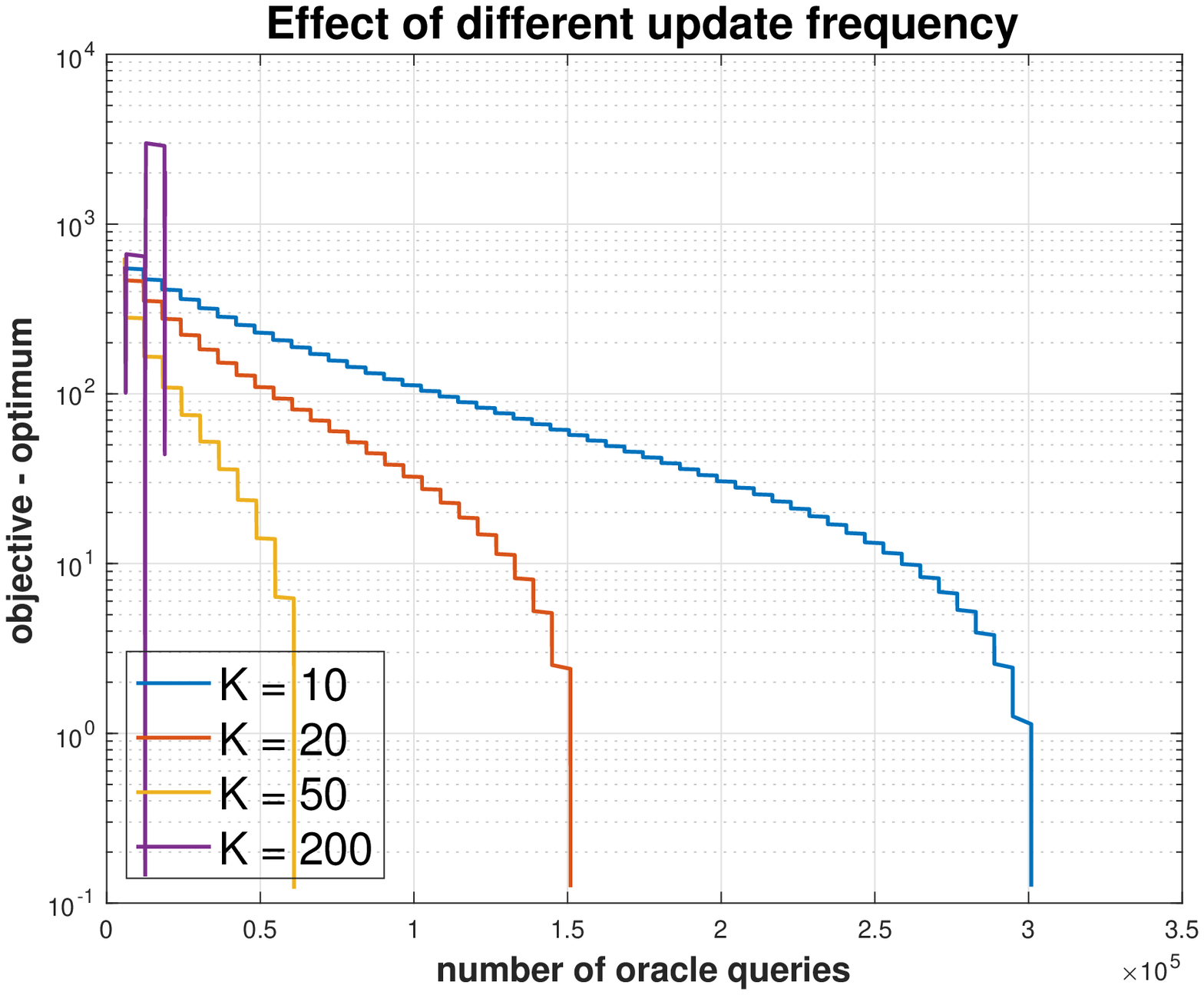}
    	\caption{Effect of different update period}
    	\label{fig3:sfig1}
  	\end{subfigure}%
  	\begin{subfigure}{.5\textwidth}
    	\centering
    	\includegraphics[width=.8\linewidth]{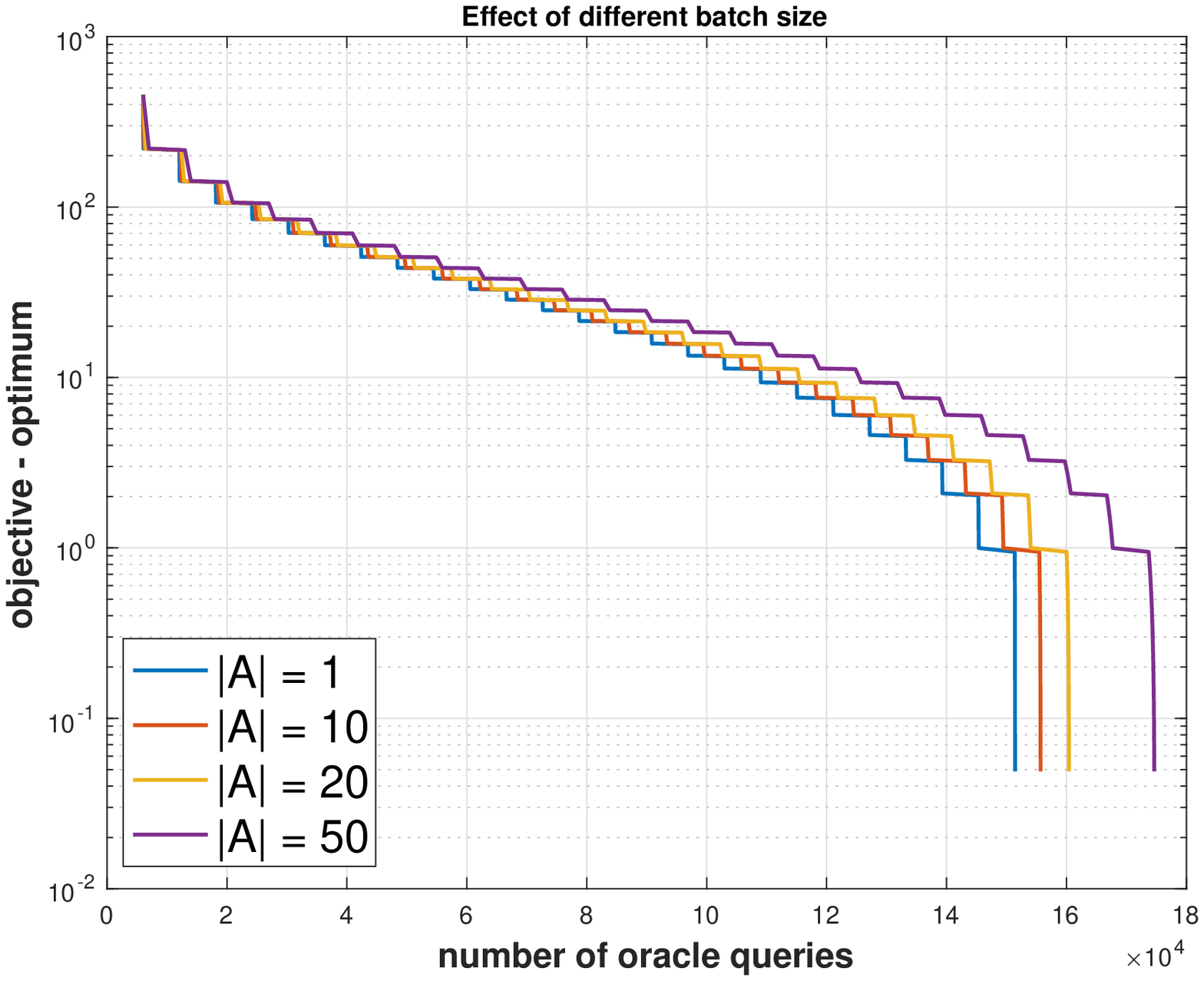}
    	\caption{Effect of different mini-batch size}
    	\label{fig3:sfig2}
  	\end{subfigure}
	\caption{Effect of user-specified parameters: (a) the duration between successive full gradient computations, $K$ and (b) mini-batch size, $\vert A \vert$, on the performance of C-SAG.}
  	\label{fig3:fig}
\end{figure}

A second parameter of interest is the size of the mini-batch used by C-SAG. We experiment with different mini-batch sizes, from 1, 10, 20, to 50. The  results are shown in Figure 3(b). Not surprisingly, we observe that C-SAG converges faster as  mini-batch size is decreased.

\section{Summary and Discussion}

% To the best of our knowledge, the current state-of-the-art algorithms for composition finite-sum objectives are the Compositional-SVRG-1 and Compositional-SVRG-2 \cite{lian2017finite}. These two algorithms are developed based on the well-known stochastic variance reduced gradient (SVRG) method \cite{johnson2013accelerating} and the stochastic compositional gradient descent (SCGD) \cite{wang2017stochastic}. The two algorithms possess linear convergence rate, which is the same as C-SAG algorithm. However, C-SAG demands lower oracle query complexity than the two algorithms. For Compositional-SVRG-1 algorithm, it requires $2A+4$ queries per iteration, which is double the complexity of C-SAG. As for Compositional-SVRG-2, it requires $2A+2B+2$ queries in each iteration, which is more than twice the complexity of C-SAG.

Many machine learning, statistical inference, and portfolio optimization problems require minimization of a composition of expected value functions.   We  have introduced C-SAG, a novel extension of SAG, to minimize composition of finite-sum functions, i.e., finite sum variants of composition of expected value functions. We have established the convergence of the resulting algorithm and shown that it, like other state-of-the-art methods e.g., C-SVRG,  achieves a linear convergence rate when the objective function is strongly convex, while benefiting from lower oracle query complexity per iteration as compared to C-SVRG. We have presented results of experiments that show that C-SAG converges substantially faster than the state-of-the art C-SVRG variants.

Work in progress is aimed at (i) analyzing the convergence rate of C-SAG for general (weakly convex) problems; (ii) developing a distributed optimization algorithm for the composition finite-sum problems by combining C-SAG with the ADMM framework \cite{boyd2011distributed}; and (iii) applying C-SAG and its variants to problems of practical importance in machine learning. 
\section*{Acknowledgments}
This project was supported in part by the National Center for Advancing Translational Sciences, National Institutes of Health through the grant UL1 TR000127 and TR002014, by the National Science Foundation, through the grants 1518732, 1640834, and 1636795, the Pennsylvania State University’s Center for Big Data Analytics and Discovery  Informatics, the Edward Frymoyer Endowed Professorship in Information Sciences and Technology at Pennsylvania State University and the Sudha Murty Distinguished Visiting Chair in Neurocomputing and Data Science funded by the Pratiksha Trust at the Indian Institute of Science [both held by Vasant Honavar]. The content is solely the responsibility of the authors and does not necessarily represent the official views of the sponsors. 
%
% ---- Bibliography ----
%
% BibTeX users should specify bibliography style 'splncs04'.
% References will then be sorted and formatted in the correct style.
%
\bibliographystyle{splncs04}

\bibliography{Bibliography.bib}

\newpage
\section*{Appendix}
We present here the convergence analysis of the C-SAG algorithm.

\subsection*{Preliminaries}
Let  $x^{*}$ be the unique minimizer of the $f$ in Eq.(\ref{objFunc}). Let the random variables $z_i^k$, $y_i^k$, and $w_i^k$ be defined as follows:
\begin{equation}
	\label{RVdef}
    \begin{aligned}
    	&z_i^k=
        \begin{cases}
        	1-\frac{1}{m}&\hspace{0.5cm},\textnormal{with probability} = \frac{1}{m}\\
            -\frac{1}{m}&\hspace{0.5cm},\textnormal{otherwise}
        \end{cases}\\
        &y_i^k=
        \begin{cases}
        	1-\frac{a}{m}&\hspace{0.5cm},\textnormal{with probability} = \frac{a}{m}\\
            -\frac{a}{m}&\hspace{0.5cm},\textnormal{otherwise}
        \end{cases}\\
        &w_i^k=
        \begin{cases}
        	1-\frac{1}{n}&\hspace{0.5cm},\textnormal{with probability} = \frac{1}{n}\\
            -\frac{1}{n}&\hspace{0.5cm},\textnormal{otherwise}
        \end{cases},
    \end{aligned}
\end{equation}
where $a$ is the size of the mini batch $A_k$. 
%Note that, for a given $k$, the variables $z_1^k, \ldots, z_m^k$ are not independent. From these definitions, the random variables have zero expectation and they are independent of each other. 
We can rewrite $J_j$, $V_j$, and $Q_i$ as
\begin{equation}
	\label{rewrite}
    \begin{aligned}
    	& J_j^k = (1-\frac{1}{m})J_j^{k-1} + \frac{1}{m} \partial G_j^{k-1} + z_j^k\left[ \partial G_j^{k-1}-J_j^{k-1} \right],\\
		& V_j^k = (1-\frac{a}{m})V_j^{k-1} + \frac{a}{m} G_j^{k-1} + y_j^k\left[ G_j^{k-1}-V_j^{k-1} \right],\\
        & Q_j^k = (1-\frac{1}{n})Q_i^{k-1} + \frac{1}{n} \nabla F_i(\hat{G}^{k-1}) + w_i^k\left[ \nabla F_i(\hat{G}^{k-1})-Q_i^{k-1} \right],\\
    \end{aligned}
\end{equation}
and therefore Eq.(\ref{Equpdate}) is equivalent to
\begin{equation}
	\label{EqUpdateRewrite}
	\begin{alignedat}{3}	
    		x^{k}&=x^{k-1} - &&\alpha \nabla \hat{f}(x^{k-1})\\
        	&=x^{k-1} - &&\alpha \left( \partial \hat{G}^{k-1} \right)^T \left( \nabla \hat{F}(\hat{G}^{k-1} \right)\\
        	&=x^{k-1} - &&\alpha \left( \frac{1}{m}\sum_{j=1}^m J_j^k \right)^T \left( \frac{1}{n}\sum_{i=1}^n Q_i^k \right)\\
        	&=x^{k} - &&\frac{\alpha}{mn}\left( (1-\frac{1}{m})e^T J^{k-1} + \partial G^{k-1} + (z^k)^T\left[ G'^{k-1}-J^{k-1} \right] \right)^T\\
        	& &&\left( (1-\frac{1}{n})e^T Q^{k-1} + \nabla F(\hat{G}^{k-1}) + (w^k)^T\left[ F'(\hat{G}^{k-1})-Q^{k-1} \right] \right)\\
            & = x^{k-1} - &&\frac{\alpha}{mn} S
	\end{alignedat}
\end{equation}
with
\begin{equation*}
	\begin{alignedat}{3}
		&e = \left(
    	\begin{tabular}{c}
    		$I$\\
        	$\vdots$\\
        	$I$
    	\end{tabular}\right) \in \R^{mq\times q},
        \blank &&G'^{k-1}= \left(
        \begin{tabular}{c}
    		$\partial G_1^{k-1}$\\
        	$\vdots$\\
        	$\partial G_m^{k-1}$
    	\end{tabular}\right) \in \R^{mq},
        \blank &&F'^{k-1}= \left(
        \begin{tabular}{c}
    		$\nabla F_1^{k-1}$\\
        	$\vdots$\\
        	$\nabla F_n^{k-1}$
    	\end{tabular}\right) \in \R^{mq\times p},\\
        &J^{k-1} = \left(
    	\begin{tabular}{c}
    		$J_1^{k-1}$\\
        	$\vdots$\\
        	$J_m^{k-1}$
    	\end{tabular}\right) \in \R^{mq\times p},
        \blank &&Q^{k-1} = \left(
    	\begin{tabular}{c}
    		$Q_1^{k-1}$\\
        	$\vdots$\\
        	$Q_m^{k-1}$
    	\end{tabular}\right) \in \R^{mq},
        \blank &&z^{k} = \left(
    	\begin{tabular}{c}
    		$z_1^{k}I$\\
        	$\vdots$\\
        	$z_m^{k}I$
    	\end{tabular}\right) \in \R^{mq\times q},\\
        &w^{k} = \left(
    	\begin{tabular}{c}
    		$w_1^{k}I$\\
        	$\vdots$\\
        	$w_m^{k}I$
    	\end{tabular}\right) \in \R^{mq\times q}.
	\end{alignedat}
\end{equation*}
In the last line of Eq.(\ref{EqUpdateRewrite}), we denote
\begin{equation*}
	\begin{alignedat}{2}
		S = &\left( (1-\frac{1}{m})e^T J^{k-1} + \partial G^{k-1} + (z^k)^T\left[ G'^{k-1}-J^{k-1} \right] \right)^T\\
        	&\left( (1-\frac{1}{n})e^T Q^{k-1} + \nabla F(\hat{G}^{k-1}) + (w^k)^T\left[ F'(\hat{G}^{k-1})-Q^{k-1} \right] \right).
	\end{alignedat}
\end{equation*}
Expanding the matrix multiplication, we obtain
\begin{equation}
	\label{S}
    \begin{aligned}
    	S
    	=&(1-\frac{1}{m})(1-\frac{1}{n})(J^{k-1})^Te e^TQ^{k-1}\\
        +&(1-\frac{1}{m})(J^{k-1})^Te \nabla F(\hat{G}^{k-1})\\
        +&(1-\frac{1}{m})(J^{k-1})^Te (w^k)^T\left[ F'(\hat{G}^{k-1})-Q^{k-1} \right]\\
        +&(1-\frac{1}{n})(\partial G^{k-1})^T e^T Q^{k-1}\\
        +&(\partial G^{k-1})^T \nabla F(\hat{G}^{k-1})\\
        +&(\partial G^{k-1})^T (w^k)^T\left[ F'(\hat{G}^{k-1})-Q^{k-1} \right]\\
        +&(1-\frac{1}{n})\left[ G'^{k-1}-J^{k-1} \right]^T(z^k) e^T Q^{k-1}\\
        +&\left[ G'^{k-1}-J^{k-1} \right]^T(z^k) \nabla F(\hat{G}^{k-1})\\
        +&\left[ G'^{k-1}-J^{k-1} \right]^T(z^k) (w^k)^T\left[ F'(\hat{G}^{k-1})-Q^{k-1} \right].
    \end{aligned}
\end{equation}

In addition to the assumptions in Section 4, we make use of several inequalities in our proof. First, for any $\alpha > 0$, we have: 
\begin{equation}
	\label{inequ1}
    \frac{1}{\alpha} \V x \V^2 + \alpha \V y \V ^2 \geq \vert \left\langle x,y \right\rangle \vert \tag{$\Box_1$}.
\end{equation}
Second, for any $x$ we have:
\begin{equation}
	\label{inequ2}
    \V x_1 + x_2 + \ldots + x_t \V^2 \leq t\left( \V x_1 \V^2 + \ldots + \V x_t \V^2 \right), \blank \forall t\in\mathbb{N}^+. \tag{$\Box_2$}
\end{equation}
Third, for any matrix $A$ and vector $x$, we have:
\begin{equation}
	\label{inequ3}
    \V Ax \V^2 \leq \V A \V^2 \V x \V^2 \tag{$\Box_3$}
\end{equation}
Lastly, note that $\nabla f(x^*) = (\partial G(x^*))^T\nabla F(G(x^*)) = 0$.
\subsection*{Proof of Convergence of C-SAG}
We decompose the expectation $\E \Vert x_k - x^* \Vert^2$ as follows:
\begin{equation}
	\label{decomp}
	\begin{alignedat}{2}
		\E \V x^k - x^* \V^2
        & = \E \V x^{k-1} - x^* \V^2 + \E \V x^k - x^{k-1} \V ^2 + 2\E \left\langle x^k-x^{k-1},x^{k-1}-x^*\right\rangle\\
        & = \E \V x^{k-1} -x^* \V^2 + \alpha^2 \E \V S \V^2 - 2\alpha \E \left\langle S,x^{k-1},x^*\right\rangle.
\end{alignedat}
\end{equation}
Recalling  that $\E[z^k] = 0$, $\E[w^k] = 0$, and $z^k$, $w^k$ are independent, we can rewrite the last term as:
\begin{alignat*}{3}
	\E \left\langle S,x^{k-1},x^* \right\rangle&=&&(1-\frac{1}{m})(1-\frac{1}{n})\left\langle(J^{k-1})^Te e^TQ^{k-1},x^{k-1}-x^*\right\rangle \tag{$s_1$}\\
	&+&&(1-\frac{1}{m})\left\langle(J^{k-1})^Te \nabla F(\hat{G}^{k-1}),x^{k-1}-x^*\right\rangle \tag{$s_2$}\\
	&+&&(1-\frac{1}{n})\left\langle(\partial G^{k-1})^T e^T Q^{k-1},x^{k-1}-x^*\right\rangle \tag{$s_3$}\\
	&+&&\left\langle(\partial G^{k-1})^T \nabla F(\hat{G}^{k-1}),x^{k-1}-x^*\right\rangle \tag{$s_4$}.
\end{alignat*}
Letting $\beta_1 = (1-\frac{1}{m})(1-\frac{1}{n})$ we have:  
\begin{equation}
	\label{s1}
    \begin{alignedat}{2}
    	s_1
        &= (1-\frac{1}{m})(1-\frac{1}{n})\left\langle(J^{k-1})^Te e^TQ^{k-1},x^{k-1}-x^*\right\rangle\\
        &\overset{\Box_1}{\geq} -\frac{\beta_1}{\alpha_1}\V (J^{k-1})^Tee^TQ^{k-1} \V^2 - \alpha_1 \beta_1 \V x^{k-1}-x^* \V^2\\
        &\overset{\Box_3}{\geq} -\frac{\beta_1}{\alpha_1} \V e^T J^{k-1} \V^2 \V e^TQ^{k-1} \V^2 - \alpha_1 \beta_1 \V x^{k-1} - x^* \V^2 \\
        &= -\frac{\beta_1}{\alpha_1} \V \sum_{j=1}^m J_j^{k-1} \V^2 \V \sum_{i=1}^n Q_i^{k-1} - \nabla F_i(G^*) \V^2 - \alpha_1 \beta_1 \V x^{k-1} - x^* \V^2\\
        &\overset{\overset{\Box_2}{\Delta_3}}{\geq} -\frac{\beta_1}{\alpha_1}m^2B_G^2 n\underset{T_0}{\underbrace{\sum_{i=1}^n\V Q_i^{k-1} - \nabla F_i(G^*) \V^2}}- \alpha_1 \beta_1 \V x^{k-1} - x^* \V^2,
    \end{alignedat}
\end{equation}
for any $\alpha_1>0$. Letting $\beta_2=1-\frac{1}{m}$, we have:
\begin{equation}
	\label{s2}
    \begin{alignedat}{2}
    	s_2
        &= (1-\frac{1}{m})\left\langle(J^{k-1})^Te \nabla F(\hat{G}^{k-1}),x^{k-1}-x^*\right\rangle\\
        &\overset{\Box_1}{\geq} -\frac{\beta_2}{\alpha_2}\V (J^{k-1})^Te\nabla F(\hat{G}^{k-1}) \V^2 - \alpha_2 \beta_2 \V x^{k-1}-x^* \V^2\\
        &\overset{\Box3}{\geq} -\frac{\beta_2}{\alpha_2}\V e^TJ^{k-1} \V^2 \V \nabla F(\hat{G}^{k-1}) - \nabla F(G^*) \V^2 - \alpha_2 \beta_2 \V x^{k-1}-x^* \V^2\\
        &\overset{\Delta_4}{\geq} -\frac{\beta_2}{\alpha_2}B_G^2L_F^2 \V \hat{G}^{k-1} - G^* \V^2 - \alpha_2 \beta_2 \V x^{k-1}-x^* \V^2\\
        &\geq -\frac{\beta_2}{\alpha_2} \frac{B_G^2L_F^2}{m}\underset{T_1}{\underbrace{ \sum_{j=1}^m \V V_j^k - G_j^* \V^2}} - \alpha_2 \beta_2 \V x^{k-1}-x^* \V^2
    \end{alignedat}
\end{equation}
for any $\alpha_2>0$. Similarly letting $\beta_3 = 1-\frac{1}{n}$, we have:
\begin{equation}
	\label{s3}
    \begin{alignedat}{2}
    	s_3
        &=(1-\frac{1}{n})\left\langle(\partial G^{k-1})^T e^T Q^{k-1},x^{k-1}-x^*\right\rangle \\
        &\overset{\Box_1}{\geq} -\frac{\beta_3}{\alpha_3}\V (\partial G^{k-1})^T e^T Q^{k-1}) \V^2 - \alpha_3 \beta_3 \V x^{k-1}-x^* \V^2\\
        &\overset{\Box_3}{\geq} -\frac{\beta_3}{\alpha_3} \V \partial G^{k-1}) \V^2 \V e^TQ^{k-1} \V^2 - \alpha_3 \beta_3 \V x^{k-1}-x^* \V^2\\
        &= -\frac{\beta_3}{\alpha_3} \V \partial G^{k-1}) \V^2 \V \sum_{i=1}^n Q_i^{k-1} - \nabla F_i(G^*) \V^2 - \alpha_3 \beta_3 \V x^{k-1}-x^* \V^2\\
        &\overset{\Delta_2}{\geq} -\frac{\beta_3}{\alpha_3}B_G^2nT_0 - \alpha_3 \beta_3 \V x^{k-1}-x^* \V^2,
    \end{alignedat}
\end{equation}
for any $\alpha_3>0$. Lastly, we have:
\begin{equation}
	\label{s4_short}
    \begin{alignedat}{2}
    	s_4
        &=\left\langle(\partial G^{k-1})^T \nabla F(\hat{G}^{k-1}),x^{k-1}-x^*\right\rangle\\
        &=\left\langle(\partial G^{k-1})^T \nabla F(\hat{G}^{k-1}) - \nabla f(x^{k-1}),x^{k-1}-x^*\right\rangle + \left\langle \nabla f(x^{k-1})),x^{k-1}-x^*\right\rangle\\
        &\overset{\Delta_1}{\geq} \underset{t_0}{\underbrace{\left\langle(\partial G^{k-1})^T \nabla F(\hat{G}^{k-1}) - \nabla f(x^{k-1}),x^{k-1}-x^*\right\rangle}} + \mu_f \V x^{k-1} - x^*\V^2,
    \end{alignedat}
\end{equation}
and
\begin{equation}
	\label{t0}
    \begin{alignedat}{2}
    	t_0
        &\overset{\Box_1}{\geq} -\frac{8}{\mu_f}\V (\partial G^{k-1})^T \nabla F(\hat{G}^{k-1}) - \nabla f(x^{k-1}) \V^2 - \frac{\mu_f}{8} \V x^{k-1}-x^* \V^2\\
        &= -\frac{8}{\mu_f}\V (\partial G^{k-1})^T \nabla F(\hat{G}^{k-1}) - (\partial G^{k-1})^T \nabla F(G^{k-1}) \V^2 - \frac{\mu_f}{8} \V x^{k-1}-x^* \V^2\\
        &\overset{\Box_3}{\geq} -\frac{8}{\mu_f}\V \partial G^{k-1} \V^2 \V \nabla F(\hat{G}^{k-1}) - \nabla F(G^{k-1}) \V^2 - \frac{\mu_f}{8} \V x^{k-1}-x^* \V^2\\
        &\overset{\overset{\Delta_2}{\Delta_5}}{\geq} -\frac{8}{\mu_f}B_G^2 L_F^2 \V \hat{G}^{k-1} - G^{k-1} \V^2 - \frac{\mu_f}{8} \V x^{k-1}-x^* \V^2.\\
    \end{alignedat}
\end{equation}
Since there are $a$ terms among $V_j^{k}, j\in \left[ 1,\ldots,m \right]$, which get evaluated, we can write: 
\begin{alignat*}{2}
	\E \V \hat{G}^{k-1} - G^{k-1} \V^2
	&=\E \frac{1}{m^2}\V \sum_{j \notin A_k}^m \left(V_j^{k} - G_j^{k-1} \right) + \sum_{j \in A_k}^m \left( G_j^{k-1} - G_j^{k-1} \right) \V^2\\
	&\leq \frac{m-a}{m^2} \sum_{j=1}^m \V V_j^{k} - G_j^{k-1} \V^2\\
	&= \frac{m-a}{m^2} \sum_{j=1}^m \V V_j^{k} - G_j^* + G_j^* - G_j^{k-1} \V^2\\
	&\leq \frac{2(m-a)}{m^2} \sum_{j=1}^m \V (V_j^{k} - G_j^*)\V^2 + \V (G_j^* - G_j^{k-1}) \V^2\\
    &\leq \frac{2(m-a)}{m^2} \left(T_1 + m B_G^2 \V x^{k-1} - x^* \V^2 \right),
\end{alignat*}
Hence, $s_4$ can be bounded by:
\begin{equation}
	\label{s4}
    \begin{alignedat}{2}
    	s_4
        &\geq -\frac{8}{\mu_f}B_G^2 L_F^2 \V \hat{G}^{k-1} - G^{k-1} \V^2 + \frac{7}{8}\mu_f \V x^{k-1}-x^* \V^2\\
        &\geq -\frac{16(m-a)}{m^2\mu_f}B_G^2 L_F^2 T_1 + \left( \frac{7}{8}\mu_f - \frac{16(m-a)}{m\mu_f}B_G^4 L_F^2\right) \V x^{k-1}-x^* \V^2.
    \end{alignedat}
\end{equation}
Adding up $s_1$ through $s_4$, we have:
\begin{equation}
	\label{boundInnerProd}
	\begin{alignedat}{2}
		\E \left\langle S,x^{k-1},x^* \right\rangle
        \geq &\left( -\frac{\beta_1}{\alpha_1}m^2 -\frac{\beta_3}{\alpha_3} \right)nB_G^2 T_0\\
        +&\left( -\frac{\beta_2}{m\alpha_2} -\frac{16(m-a)}{m^2\mu_f} \right)B_G^2L_F^2 T_1\\
        +&\left( \frac{7}{8}\mu_f - \frac{16(m-a)}{m\mu_f}B_G^4 L_F^2 -\alpha_1 \beta_1 -\alpha_2 \beta_2 -\alpha_3 \beta_3 \right)\V x^{k-1} -x^* \V^2.
	\end{alignedat}
\end{equation}

Next, we shall derive the inequality for the $\E \V S \V^2$ term. Using the inequality ($\Box_2$) we obtain:
\begin{alignat}{3}
	\E \V S \V^2 &\leq9 &&\left((1-\frac{1}{m})^2 (1-\frac{1}{n})^2 \V (J^{k-1})^T e e^T Q^{k-1} \V^2 \right. \tag{$s_5$}\\
    & &&+(1-\frac{1}{m})^2 \V (J^{k-1})^T e \nabla F(\hat{G}^{k-1}) \V^2 \tag{$s_6$}\\
    & &&+(1-\frac{1}{m})^2 \V (J^{k-1})^T e (w^k)^T\left[ F'(\hat{G}^{k-1}) - Q^{k-1} \right] \V^2 \tag{$s_7$}\\
    & &&+(1-\frac{1}{n})^2 \V (\partial G^{k-1})^T e^T Q^{k-1}  \V^2 \tag{$s_8$}\\
    & &&+\V (\partial G^{k-1})^T \nabla F(\hat{G}^{k-1}) \V^2 \tag{$s_9$}\\
    & &&+\V (\partial G^{k-1})^T (w^k)^T\left[ F'(\hat{G}^{k-1}) - Q^{k-1} \right] \V^2 \tag{$s_{10}$}\\
    & &&+(1-\frac{1}{n})^2 \V \left[G'^{k-1} - J^{k-1}\right]^T (z^k) e^T Q^{k-1} \V^2 \tag{$s_{11}$}\\
    & &&+\V \left[G'^{k-1} - J^{k-1}\right]^T (z^k) \nabla F(\hat{G}^{k-1}) \V^2 \tag{$s_{12}$}\\
    & &&+\left.\V \left[G'^{k-1} - J^{k-1}\right]^T (z^k) (w^k)^T\left[ F'(\hat{G}^{k-1}) - Q^{k-1} \right] \V^2 \tag{$s_{13}$} \right).
\end{alignat}
The  term $s_5$ can be bounded by:
\begin{equation}
	\label{s5}
    \begin{alignedat}{2}
    	s_5
        &=(1-\frac{1}{m})^2 (1-\frac{1}{n})^2 \V (J^{k-1})^T e e^T Q^{k-1} \V^2\\
        &\leq (1-\frac{1}{m})^2 (1-\frac{1}{n})^2 \V e^T (J^{k-1}) \V^2 \V e^T Q^{k-1} \V^2 \\
        &\leq (1-\frac{1}{m})^2 (1-\frac{1}{n})^2  m^2 B_G^2  \V \sum_{i=1}^n Q_i^{k-1} - \nabla F_i(G^*) \V^2\\
        &\leq (m-1)^2 (1-\frac{1}{n})^2 B_G^2  n T_0.
    \end{alignedat}
\end{equation}
The term $s_6$ can be bounded by:
\begin{equation}
	\label{s6}
    \begin{alignedat}{2}
    	s_6
        &= (1-\frac{1}{m})^2 \V (J^{k-1})^T e \nabla F(\hat{G}^{k-1}) \V^2\\
        &\leq (1-\frac{1}{m})^2 \V e^T (J^{k-1}) \V^2 \V \nabla F(\hat{G}^{k-1}) - \nabla F(G^*) \V^2\\
        &\leq (m - 1)^2 B_G^2 L_F^2 \frac{T_1}{m}.
    \end{alignedat}
\end{equation}
To bound the term $s_7$, we observe that:
\begin{equation}
	\label{s7}
    \begin{alignedat}{2}
    	s_7
        &= (1-\frac{1}{m})^2 \V (J^{k-1})^T e (w^k)^T\left[ F'(\hat{G}^{k-1}) - Q^{k-1} \right] \V^2\\
        &\leq (1-\frac{1}{m})^2 \V e^T (J^{k-1}) \V^2 \V (w^k) \V^2 \V F'(\hat{G}^{k-1}) - Q^{k-1} \V^2\\
        &\leq (1-\frac{1}{m})^2 m^2 B_G^2 \V (w^k) \V^2 \V F'(\hat{G}^{k-1}) - F'(G^*) + F'(G^*) - Q^{k-1} \V^2\\
        &\leq 2(1-\frac{1}{m})^2 m^2 B_G^2 \V (w^k) \V^2 \left( \V F'(\hat{G}^{k-1}) - F'(G^*) \V^2  + \V F'(G^*) - Q^{k-1} \V^2 \right)\\
        &\leq 2(1-\frac{1}{m})^2 m^2 B_G^2 \V (w^k) \V^2 \left( n L_F^2\V \hat{G}^{k-1} - G^* \V^2  + \V Q^{k-1} - F'(G^*) \V^2 \right)\\
        &\leq 2(m - 1)^2 B_G^2 \V (w^k) \V^2 \left( n L_F^2 \frac{T_1}{m}  + T_0 \right).
    \end{alignedat}
\end{equation}
Next, the term $s_8$ can be bounded by:
\begin{equation}
	\label{s8}
    \begin{alignedat}{2}
    	s_8
        &=  (1-\frac{1}{n})^2 \V (\partial G^{k-1})^T e^T Q^{k-1}  \V^2\\
        &\leq (1-\frac{1}{n})^2 \V \partial G^{k-1} \V^2 \V \sum_{i=1}^n Q_i^{k-1} - \nabla F_i(G^*) \V^2\\
        &\leq (1-\frac{1}{n})^2 B_G^2 n T_0.
    \end{alignedat}
\end{equation}
The term $s_9$ can be bounded by:
\begin{equation}
	\label{s9}
    \begin{alignedat}{2}
    	s_9
        &=  \V (\partial G^{k-1})^T \nabla F(\hat{G}^{k-1}) \V^2\\
        &\leq \V \partial G^{k-1} \V^2 \V \nabla F(\hat{G}^{k-1}) - \nabla F(G^*) \V^2\\
        &\leq B_G^2 L_F^2 \frac{T_1}{m}.
    \end{alignedat}
\end{equation}
Similar to $s_7$, we can bound $s_{10}$ by:
\begin{equation}
	\label{s10}
    \begin{alignedat}{2}
    	s_{10}
        &= \V (\partial G^{k-1})^T (w^k)^T\left[ F'(\hat{G}^{k-1}) - Q^{k-1} \right] \V^2\\
        &\leq \V \partial G^{k-1} \V^2 \V w^k \V^2 \V F'(\hat{G}^{k-1}) - Q^{k-1} \V^2\\
        &\leq B_G^2 \V w^k \V^2 \left( n L_F^2 \frac{T_1}{m}  + T_0 \right).
    \end{alignedat}
\end{equation}
To bound $s_{11}$, observe that:
\begin{equation}
	\label{s11}
    \begin{alignedat}{2}
    	s_{11}
        &= (1-\frac{1}{n})^2 \V \left[G'^{k-1} - J^{k-1}\right]^T (z^k) e^T Q^{k-1} \V^2\\
        &\leq (1-\frac{1}{n})^2 \V G'^{k-1} - J^{k-1} \V^2 \V z^k \V^2 \V \sum_{i=1}^n Q_i^{k-1} - \nabla F_i(G^*) \V^2\\
        &\leq (1-\frac{1}{n})^2 2(\V G'^{k-1}\V^2 + \V J^{k-1} \V^2) \V z^k \V^2 n T_0\\
        &\leq (1-\frac{1}{n})^2 4m B_G^2 \V z^k \V^2 n T_0.
    \end{alignedat}
\end{equation}
Similarly, we can bound the term $s_{12}$ by:
	\begin{equation}
    \label{s12}
    \begin{alignedat}{2}
    	s_{12}
        &= \V \left[G'^{k-1} - J^{k-1}\right]^T (z^k) \nabla F(\hat{G}^{k-1}) \V^2\\
        &\leq \V G'^{k-1} - J^{k-1}\V^2 \V (z^k) \V^2 \V \nabla F(\hat{G}^{k-1}) \V^2\\
        &\leq 4m B_G^2 \V z^k \V^2 \V \nabla F(\hat{G}^{k-1}) - \nabla F(G^*) \V^2\\
        &\leq 4m B_G^2 \V z^k \V^2 L_F^2 \V \hat{G}^{k-1} - G^* \V^2\\
        &=4m B_G^2 \V z^k \V^2 L_F^2 \frac{T_1}{m}.
    \end{alignedat}
\end{equation}
Lastly, to bound $s_{13}$, proceeding similar to the cases of $s_{10}$ and $s_{12}$, we obtain:
\begin{equation}
	\label{s13}
    \begin{alignedat}{2}
    	s_{13}
        &= \V \left[G'^{k-1} - J^{k-1}\right]^T (z^k) (w^k)^T\left[ F'(\hat{G}^{k-1}) - Q^{k-1} \right] \V^2\\
        &\leq 4m B_G^2 \V z^k \V^2 \V w^k \V^2 \left( n L_F^2 \frac{T_1}{m}  + T_0 \right).
    \end{alignedat}
\end{equation}
Additionally, since $w_i^k$ and $w_{i'}^k$ are dependent for $i,i'\in \left[1,\ldots, n \right]$, we have the following,
\begin{equation}
	\begin{alignedat}{2}
		\E \V w \V^2 
        &= \E \V \left[ w_1I, w_2I, \ldots, w_nI \right]^T \V^2\\
        &=\E \left( w_1^2+w_2^2+\ldots+w_n^2 \right) \V I \V^2\\
        &=(1-\frac{1}{n})^2 + (-\frac{1}{n})^2(n-1)\\
        &=1-\frac{1}{n}.
	\end{alignedat}
\end{equation}
In a similar fashion, we can show that $\E \V z \V^2 = 1-\frac{1}{m} $. Hence, we can bound $\E \V S \V^2$ by:
\begin{equation}
	\label{boundSsquare}
	\begin{alignedat}{2}
		\E \V S \V^2
        &\leq 9&&\left( (m-1)^2(1-\frac{1}{n})n + 2(m-1)^2(1-\frac{1}{n}) \right.\\
        &  &&+ (1-\frac{1}{n})^2n + (1-\frac{1}{n}) + (1-\frac{1}{n})^24m  (1-\frac{1}{m})n\\
        & &&\left.+ 4m (1-\frac{1}{m})(1-\frac{1}{n})\right)B_G^2T_0\\
        &+ 9&&\left( \frac{(m-1)^2}{m} + 2(m-1)^2(1-\frac{1}{n})\frac{n}{m} \right.\\
        & &&+ \frac{1}{m} + (1-\frac{1}{n})\frac{n}{m} +4(1-\frac{1}{m}) \\
        & &&\left.+ 4 (1-\frac{1}{m})(1-\frac{1}{n})n \right)B_G^2L_F^2T_1\\
        &\leq 9&&\left( (m-1)^2(1-\frac{1}{n})(n + 2) + (n-1)(4m-3)\right)B_G^2T_0\\ 
        &+ \frac{9}{m}&&\left((m-1)^2(2n-1)+n+4n(m-1) \right)B_G^2L_F^2T_1\\
	\end{alignedat}
\end{equation}
Substituting Eq.(\ref{boundSsquare}) and Eq.(\ref{boundInnerProd}) into Eq.(\ref{decomp}), and choosing the parameters $\alpha_1$, $\alpha_2$, and $\alpha3$, appropriately, we have:
\begin{equation}
	\label{boundWT0T1}
    \begin{alignedat}{3}
    	&\E \V x^k - x^* \V^2 && \leq \V x^{k-1} - x^* \V^2 - 2\alpha \left( \frac{1}{2}\mu_f - \frac{16(m-a)}{m\mu_f}B_G^4 L_F^2\right)\V x^{k-1} -x^* \V^2\\
        & &&
        \begin{alignedat}{2}
        	+&\left[9\alpha^2\left((m-1)^2(1-\frac{1}{n})(n + 2) + (n-1)(4m-3)\right)\right.\\
            &\left. +16\alpha n \frac{(\beta_1^2 m^2 + \beta_3^2)}{\mu_f} \right]B_G^2T_0
        \end{alignedat}\\
        & &&
        \begin{alignedat}{2}
        	+&\left[\frac{9\alpha^2}{m}\left( (m-1)^2(2n-1)+n+4n(m-1) \right)\right.\\
            &\left. +16\alpha \frac{(\beta_2^2 m + 16(m-a))}{m^2\mu_f} \right]B_G^2L_F^2T_1.
        \end{alignedat}
    \end{alignedat}
\end{equation}

We now proceed to bound the remaining terms $T_0$ and $T_1$.
Starting with $T_0$,
\begin{equation}
	T_0 = \V Q^{k-1} = F'(G(x^*))\V^2 - \sum_{i=1}^n \V Q_i^{k-1} - \nabla F_i(G(x^*)) \V^2.
\end{equation}
We argue that although each  $Q_i$'s is not updated at each iteration, it must be updated at least once in $K$ iterations during the refresh. If we refer to the coefficient vector $x$ at the refresh stage as $\tilde{x}$, we have:
\begin{equation}
	\label{boundQ}
	\begin{alignedat}{2}
		\E \V Q_i(x^{k-1}) - \nabla F_i(G(x^*)) &\V^2
        && \leq \E \V F_i(G(\tilde{x})) - \nabla F_i(G(x^*)) \V^2\\
        & && \leq L_F^2 \V G(\tilde{x}) - G(x^*) \V\\
        & && \leq L_F^2 B_G^2 \V \tilde{x} - x^* \V^2\\
        \Rightarrow \blank &  T_0&&\leq n L_F^2 B_G^2 \V \tilde{x} - x^* \V^2.
	\end{alignedat}
\end{equation}
Next we bound $T_1$.
\begin{equation}
	\label{boundT1}
	\begin{alignedat}{2}
		\E[T_1]
        & = \E \sum_{j=1}^m \V V_j^{k} - G_j^* \V^2\\
        & = \sum_{j=1}^m \E \V V_j^k - G_j^* \V^2\\
        & = \sum_{j=1}^m \underset{t_1}{\E \underbrace{\V (1-\frac{a}{m})V_j^{k-1} +\frac{a}{m}G_j^{k-1} + y_j^k\left[ G_j^{k-1} - V_j^{k-1} \right] - G_j^* \V^2}}
	\end{alignedat}
\end{equation}
\begin{equation}
	\label{boundt1}
	\begin{alignedat}{2}
		\E[t_1]
        & = \E \V (1-\frac{a}{m})V_j^{k-1} +\frac{a}{m}G_j^{k-1} + y_j^k\left[ G_j^{k-1} - V_j^{k-1} \right] - G_j^* \V^2\\
        & = \E \V (1-\frac{a}{m})(V_j^{k-1} - G_j^*) + \frac{a}{m} \V (G_j^{k-1} - G_j^*) \V^2 + y_j^k\left[ G_j^{k-1} - V_j^{k-1} \right]\V^2\\
        &\leq 3\left( (1-\frac{a}{m})^2 \V V_j^{k-1} - G_j^* \V^2 + (\frac{a}{m})^2 \V (G_j^{k-1} - G_j^*) \V^2 + \E[(y_j^k)^2] \V G_j^{k-1} - V_j^{k-1} \V^2\right)\\
        &\leq 3\left( (1-\frac{a}{m})^2 \V V_j^{k-1} - G_j^* \V^2 + (\frac{a}{m})^2 B_G^2 \V (x^{k-1} - x^*) \V^2 + \E[(y_j^k)^2] \V G_j^{k-1} - V_j^{k-1} \V^2\right)\\
	\end{alignedat}
\end{equation}
Rewriting $(G_j^{k-1} - V_j^{k-1})$ as $(G_j^{k-1} - G_j^* + G_j^* - V_j^{k-1})$, we have:
\begin{equation}
	\label{boundt1subpart}
	\begin{alignedat}{2}
		&(1-\frac{a}{m})^2 \V V_j^{k-1} - G_j^* \V^2 + \E[(y_j^k)^2] \V G_j^{k-1} - V_j^{k-1} \V^2\\
        = &(1-\frac{a}{m})^2 \V V_j^{k-1} - G_j^* \V^2 + (1-\frac{a}{m})\frac{a}{m} \V G_j^{k-1} - G_j^* + G_j^* - V_j^{k-1} \V^2\\
        \leq &(1-\frac{a}{m})^2 \V V_j^{k-1} - G_j^* \V^2 + 2(1-\frac{a}{m})\frac{a}{m}\left(\V G_j^{k-1} - G_j^* \V^2 + \V V_j^{k-1} - G_j^* \V^2 \right)\\
        \leq &(1-\frac{a}{m})^2 \V V_j^{k-1} - G_j^* \V^2 + 2(1-\frac{a}{m})\frac{a}{m}\left(B_G^2 \V x^{k-1} - x^* \V^2 + \V V_j^{k-1} - G_j^* \V^2 \right)\\
        = &(1-\frac{a}{m})(1+\frac{a}{m}) \V V_j^{k-1} - G_j^* \V^2 + 2(1-\frac{a}{m})\frac{a}{m}B_G^2 \V x^{k-1} - x^* \V^2.
	\end{alignedat}
\end{equation}
Using an argument similar to that used in Eq.(\ref{boundQ}), we can bound $\V V_j^{k-1} - G_j^* \V^2$ using the coefficient vector $\tilde{x}$ at the refresh step:
\begin{equation}
	\label{boundV}
	\begin{alignedat}{2}
		\V V_j^{k-1} - G_j^* \V^2
        \leq \V G_j(\tilde{x}) - G_j(x^*) \V^2 \leq B_G^2 \V \tilde{x}-x^* \V^2.
	\end{alignedat}
\end{equation}
Substituting Eq.(\ref{boundV}) and Eq.(\ref{boundt1subpart}) back into Eq.(\ref{boundt1}), we obtain:
\begin{equation}
	\E[t_1] \leq 3\left( (1-\frac{a}{m})(1+\frac{a}{m}) B_G^2\V \tilde{x} - x^* \V^2 + (\frac{a}{m})(2-\frac{a}{m})B_G^2\V x^{k-1}-x^* \V^2\right).
\end{equation}
Finally, $\E[T_1]$ in Eq.(\ref{boundT1}) is bounded by:
\begin{equation}
	\label{finalBoundT1}
    \begin{alignedat}{2}
    	\E[T_1] &= \sum_{j=1}^m \E[t_1] \\
        &\leq 3m\left( (1-(\frac{a}{m})^2) B_G^2\V \tilde{x} - x^* \V^2 + (\frac{a}{m})(2-\frac{a}{m})B_G^2\V x^{k-1}-x^* \V^2\right).
    \end{alignedat}
\end{equation}
Substituting the above results back into Eq.(\ref{boundWT0T1}) we obtain:
\begin{equation}
	\label{finalbound}
	\begin{alignedat}{3}
    	&\E \V x^k - x^* \V^2 && \leq \V x^{k-1} - x^* \V^2 - 2\alpha \left( \frac{1}{2}\mu_f - \frac{16(m-a)}{m\mu_f}B_G^4 L_F^2\right)\V x^{k-1} -x^* \V^2\\
        & &&
        \begin{alignedat}{2}
        	+&\left[9\alpha^2\left((m-1)^2(1-\frac{1}{n})(n + 2) + (n-1)(4m-3)\right)\right.\\
            &\left. +16\alpha n \frac{(\beta_1^2 m^2 + \beta_3^2)}{\mu_f} \right]n B_G^4 L_F^2\V \tilde{x} - x^* \V^2
        \end{alignedat}\\
        & &&
        \begin{alignedat}{2}
        	+&\left[\frac{9\alpha^2}{m}\left( (m-1)^2(2n-1)+n+4n(m-1) \right)\right.\\
            &\left. +16\alpha \frac{(\beta_2^2 m + 16(m-a))}{m^2\mu_f} \right]B_G^4 L_F^2 3m \left( (1-(\frac{a}{m})^2)\V \tilde{x} - x^* \V^2 + (\frac{a}{m})(2-\frac{a}{m})\V x^{k-1}-x^* \V^2\right)
        \end{alignedat}\\
        & &&=\E \V x^{k-1} - x^* \V^2\\
        & && -\left[ \alpha \mu_f - \left( \frac{32\alpha (m-a)}{m\mu_f} + 3a(2-\frac{a}{m}) \sigma_2 \right) B_G^4 L_F^2\right]\V x^{k-1} - x^* \V^2\\
        & &&+\left[ n\sigma_1 + 3m \sigma_2\left( 1-(\frac{a}{m})^2 \right) \right]B_G^4 L_F^2\V \tilde{x} - x^* \V^2.
    \end{alignedat}
\end{equation}
Summing this inequality from $k=0$ (where $x^0 = \tilde{x}$) to $k=K-1$, we obtain:
\begin{equation}
	\begin{alignedat}{2}
		&\E \V x^K - x^* \V^2\\
        \leq \blank & \E \V \tilde{x} - x^* \V^2\\
        &-\left[ \alpha \mu_f - \left( \frac{32\alpha (m-a)}{m\mu_f} + 3a(2-\frac{a}{m}) \sigma_2 \right) B_G^4 L_F^2\right] \sum_{k=0}^{K-1} \E \V x^{k-1} - x^* \V^2\\
        &+K\left[ n\sigma_1 + 3m\sigma_2\left( 1-(\frac{a}{m})^2 \right) \right]B_G^4 L_F^2\V \tilde{x} - x^* \V^2,
	\end{alignedat}
\end{equation}
where
\begin{equation*}
	\begin{alignedat}{2}
    	&\begin{alignedat}{2}
        	\sigma_1 = &9\alpha^2\left((m-1)^2(1-\frac{1}{n})(n + 2) + (n-1)(4m-3)\right)+16\alpha n \frac{(\beta_1^2 m^2 + \beta_3^2)}{\mu_f} \\
        \end{alignedat}\\
		&\begin{alignedat}{2}
        	\sigma_2 = &\frac{9\alpha^2}{m}\left( (m-1)^2(2n-1)+n+4n(m-1) \right)+16\alpha \frac{(\beta_2^2 m + 16(m-a))}{m^2\mu_f}.
        \end{alignedat}
	\end{alignedat}
\end{equation*}
Hence, we have:
\begin{equation}
	\label{convbound}
	\begin{alignedat}{2}
    	&\frac{1}{K}\sum_{k=0}^{K-1}\E \V x_k - x^* \V^2 < \frac{\frac{1}{K}+\left( n\sigma_1 + 3m\left( 1-(\frac{a}{m})^2 \right)\sigma_2 \right)B_G^4 L_F^2}{ \alpha \mu_f - \left( \frac{32\alpha (m-a)}{m\mu_f} + 3a(2-\frac{a}{m}) \sigma_2 \right) B_G^4 L_F^2} \E \V \tilde{x}-x^* \V^2.
	\end{alignedat}
\end{equation}

In order to ensure that the fraction in Eq.(\ref{convbound}) above is  $< 1$, we need to choose the parameters ($a,K,\alpha$) appropriately:
\begin{alignat*}{2}
	&\frac{32\alpha (m-a)}{m\mu_f} B_G^4 L_F^2 < \frac{\alpha \mu_f}{4}\\
    \Rightarrow \blank & a > m \left( 1-\frac{\mu_f^2}{128 B_G^4 L_F^2} \right).
\end{alignat*}
Next, consider:
\begin{alignat*}{2}
	&3a\left( 2-\frac{a}{m}\right) \sigma_2  B_G^4 L_F^2 < \frac{\alpha \mu_f}{4}\\
    \Rightarrow \blank & \sigma_2 < \frac{\alpha \mu_f}{12a\left( 2-\frac{a}{m} \right) B_G^4 L_F^2}\\
    \Rightarrow \blank & \alpha < \frac{m\left( \frac{\mu_f}{12a\left( 2-\frac{a}{m} \right) B_G^4 L_F^2} - 16\frac{(\beta_2^2 m + 16(m-a))}{m^2\mu_f}\right)}{9\left( (m-1)^2(2n-1)+n+4n(m-1) \right)} = \alpha_1.
\end{alignat*}
Hence, we have:
\begin{alignat*}{2}
	&\frac{1}{K}\sum_{k=0}^{K-1}\E \V x_k - x^* \V^2
        < \frac{\frac{1}{K}+\left( n\sigma_1 + 3m\left( 1-(\frac{a}{m})^2 \right)\sigma_2 \right)B_G^4 L_F^2}{ \frac{\alpha \mu_f}{2} } \E \V \tilde{x}-x^* \V^2
\end{alignat*}
Next consider:
\begin{alignat*}{2}
	&n\sigma_1 B_G^4 L_F^2 < \frac{\alpha \mu_f}{8}\\
    \Rightarrow \blank &\sigma_1 < \frac{\alpha \mu_f}{8n B_G^4 L_F^2}\\
    \Rightarrow \blank &\alpha < \frac{\frac{\mu_f}{8n B_G^4 L_F^2} - 16 n \frac{(\beta_1^2 m^2 + \beta_3^2)}{\mu_f}}{9\left((m-1)^2(1-\frac{1}{n})(n + 2) + (n-1)(4m-3)\right)} = \alpha_2,
\end{alignat*}
and
\begin{alignat*}{2}
	&3m\left( 1-\left( \frac{a}{m} \right)^2 \right)\sigma_2 B_G^4 L_F^2 < \frac{\alpha \mu_f}{8}\\
    \Rightarrow \blank &\sigma_2 < \frac{\alpha \mu_f}{24m\left( 1-\left( \frac{a}{m} \right)^2 \right)B_G^4 L_F^2}\\
    \Rightarrow \blank &\alpha < \frac{m \left( \frac{\mu_f}{24m\left( 1-\left( \frac{a}{m} \right)^2 \right)B_G^4 L_F^2} - 16 \frac{(\beta_2^2 m + 16(m-a))}{m^2\mu_f} \right)}{9\left( (m-1)^2(2n-1)+n+4n(m-1) \right)} = \alpha_3,
\end{alignat*}
Hence, We have:
\begin{alignat*}{2}
	&\frac{1}{K}\sum_{k=0}^{K-1}\E \V x_k - x^* \V^2 < \frac{\frac{1}{K}+\frac{\alpha \mu_f}{4}}{ \frac{\alpha \mu_f}{2} } \E \V \tilde{x}-x^* \V^2
\end{alignat*}
Thus, if
\begin{alignat*}{2}
	&\frac{1}{K} < \frac{\alpha \mu_f}{8}\\
    \Rightarrow \blank & K > \frac{8}{\alpha \mu_f},
\end{alignat*}
we have:
\begin{alignat*}{2}
	&\frac{1}{K}\sum_{k=0}^{K-1}\E \V x_k - x^* \V^2 < \frac{\frac{\alpha \mu_f}{8}+\frac{\alpha \mu_f}{4}}{ \frac{\alpha \mu_f}{2} } \E \V \tilde{x}-x^* \V^2 = \frac{3}{4} \E \V \tilde{x}-x^* \V^2.
\end{alignat*}
Thus, if
\begin{alignat*}{2}
	\begin{cases}
		& a > m \left( 1-\frac{\mu_f^2}{128 B_G^4 L_F^2} \right),\\
    	& \alpha < \min \left\lbrace \alpha_1,\alpha_2,\alpha_3 \right\rbrace,\\
    	& K > \frac{8}{\alpha \mu_f},
	\end{cases}
\end{alignat*}
then C-SAG converges linearly with a rate of $3/4$. This concludes the proof.
\end{document}